

\documentclass{aims} 
\usepackage{amsmath}
\usepackage{paralist}
\usepackage[misc]{ifsym}
\usepackage{epsfig} 
\usepackage{epstopdf} 
\usepackage[colorlinks=true]{hyperref}
\usepackage{amsfonts}
\usepackage{amsmath, cases}
\usepackage{indentfirst}
\usepackage{graphicx}
\usepackage{subfigure}
\usepackage{latexsym,bm, amsthm}
\usepackage{multirow}
\usepackage{epsfig}
\usepackage{latexsym}
\usepackage{float}
\usepackage{amssymb}
\usepackage{algorithm}
\usepackage{algorithmic}
\usepackage{cleveref}  
\usepackage{epstopdf} 
\usepackage{float}
\usepackage[normalem]{ulem}
\usepackage{tabularray}
\usepackage{natbib}
\setcitestyle{numbers,square}

\usepackage{float}
\newcommand{\iq}{\boldsymbol{i}}
\newcommand{\jq}{\boldsymbol{j}}
\newcommand{\kq}{\boldsymbol{k}}

\newcommand{\pq}{\boldsymbol{p}}

\newcommand{\qq}{\boldsymbol{q}}

\newcommand{\Bq}{\boldsymbol{B}}
\newcommand{\bq}{\boldsymbol{b}}

\newcommand{\Dq}{\boldsymbol{D}}

\newcommand{\Sq}{\boldsymbol{S}}
\newcommand{\Xq}{\boldsymbol{X}}
\newcommand{\xq}{\boldsymbol{x}}

\newcommand{\zq}{\boldsymbol{z}}

\newcommand{\yq}{\boldsymbol{y}}

\newcommand{\gq}{\boldsymbol{g}}
\newcommand{\Sb}{\boldsymbol{S}}
\newcommand{\alphaq}{\boldsymbol{\alpha}}
\newcommand{\zeroq}{\boldsymbol{0}}

\hypersetup{urlcolor=blue, citecolor=red}
\allowdisplaybreaks

\textheight=8.2 true in
 \textwidth=5.0 true in
  \topmargin 30pt
   \setcounter{page}{1}




\newtheorem{theorem}{Theorem}[section]

\newtheorem{lemma}[theorem]{Lemma}

\theoremstyle{definition}
\newtheorem{definition}[theorem]{Definition}


\title[Wasserstein Quaternion Generative Adversarial Network]
{A Novel Wasserstein Quaternion Generative Adversarial Network for Color Image Generation} 

\author[Z. Jia, D. Wang, H. Wang, Y. Xie, M. Zhao and X. Zhao]{}

\subjclass{Primary: 65K15, 68W20; Secondary: 65Y20, 68T07, 68U05.}
\keywords{Quaternion Wasserstein distance, Dual theory, Optimal transport theory, Wasserstein quaternion generative adversarial network, Color image generation.}

\thanks{This work is  supported  in part by the  National Key R\&D program of China (No. 2023YFA1010101); the National Natural Science Foundation of China under grants 12171210, 12090011, 12371378,    and 62402286;  the “QingLan” Project for Colleges and Universities of Jiangsu Province (Young and middle-aged academic leaders);		
	the Major Projects of Universities in Jiangsu Province (No. 21KJA110001); the Natural Science Foundation of Fujian Province of China grants 2022J01378 and 2023J011127; the China Postdoctoral Science Foundation with code 2024T170510; the Natural Science Foundation of Shandong Province with code ZR2023QA059; and the  Postgraduate
	Research \& Practice Innovation Program of Jiangsu Province KYCX23$\_$2900. }

\thanks{$^*$Corresponding author: Duan Wang (E-mail: duanw@jsnu.edu.cn)}


\begin{document}
\maketitle

\begin{center}
  \scshape
  Zhigang Jia$^{{\href{zhgjia@jsnu.edu.cn}{\textrm{\Letter}}}1,4}$,
  Duan Wang$^{{\href{duanw@jsnu.edu.cn}{\textrm{\Letter}}}*1}$,
  Hengkai Wang$^{{\href{2020241288@jsnu.edu.cn}{\textrm{\Letter}}}1}$,
  Yajun Xie$^{{\href{xyj@fzfu.edu.cn}{\textrm{\Letter}}}3}$,  \\ 
  Meixiang Zhao$^{{\href{zhaomeixiang2008@126.com}{\textrm{\Letter}}}1}$
  and Xiaoyu Zhao$^{{\href{ustcxyz@hotmail.com}{\textrm{\Letter}}}2}$
  
\end{center}
\medskip

{\footnotesize
 \centerline{$^1$School of Mathematics and Statistics,\\ Jiangsu Normal University, Xuzhou 221116, P. R. China}{}
} 

\medskip

{\footnotesize
	\centerline{$^2$The School of Mathematics, Hefei University of Technology, Hefei 230000, P.R. China}
}

\bigskip

{\footnotesize
	\centerline{$^3$The School of Big Data,  Fuzhou University of International Studies and Trade, Fuzhou 350202, P. R. China}
}
\medskip

{\footnotesize
 \centerline{$^4$The Research Institute of Mathematical Science,  Jiangsu Normal University,  Xuzhou 221116, P. R. China}
}
\medskip



\begin{abstract}
Color image generation has a wide range of applications, but the existing generation models ignore the correlation among color channels, which may lead to chromatic aberration problems. In addition, the data distribution problem of color images has not been systematically elaborated and explained, so that there is still the lack of the theory about measuring different color images datasets. In this paper, we define a new quaternion Wasserstein distance and develop its dual theory.  To deal with the quaternion linear programming problem, we derive the strong duality form with helps of quaternion convex set separation theorem and quaternion Farkas lemma.  With using quaternion Wasserstein distance, we propose a novel Wasserstein quaternion generative adversarial network. Experiments demonstrate that this novel model surpasses both the (quaternion) generative adversarial networks and the Wasserstein generative adversarial network in terms of generation efficiency and image quality.
\end{abstract}


\section{Introduction}
Color image generation models can be utilized for tasks such as image inpainting, denoising, and style transfer. Quaternion generative adversarial networks (QGANs) have attracted attention for their ability to preserve relationships among color image channels. However, using Jensen-Shannon divergence as an evaluation metric may not effectively quantify the relationship between real and generated images. Wasserstein distance serves as a robust measure of assessing distances between two distributions. Nevertheless, the theory of quaternion Wasserstein distance (QWD) is still in the early stages of development. This paper aims to introduce a novel Wasserstein quaternion generative adversarial network (WQGAN) model by deriving QWD and its dual form.

Generative adversarial networks (GAN) have been put into various practical applications as a generative model \cite{r15}. The training stability of GAN has been an difficult challenge to overcome since it was introduced in 2014 \cite{r4}. Many methods are proposed to improve the generative ability of GAN. Deep networks are first introduced into the GAN architecture with the proposition of deep convolutional generative adversarial networks (DCGAN) \cite{r5}. DCGAN improves the stability of the model and lays the foundation for the development of GANs. However, its model still suffers from mode collapse and needs further improvement. The authors systematically demonstrate what is the problem with the original GAN \cite{r6} and propose the Wasserstein generative adversarial networks (WGAN) \cite{r7}. The introduction of Wasserstein distance provides research direction for later GANs. 

The Wasserstein theory is a core component of the latest Optimal Transport (OT) theory. Recently, OT, as an emerging mathematical tool, has brought new ideas and solutions to the field of machine learning, gradually becoming a research hotspot.
The core idea of optimal transport originated from a classic mathematical problem, aiming to find an optimal way to transport the mass in one probability distribution to another while minimizing the transportation cost \cite{monge1781}.
Its classic formulation is the Kantorovich problem \cite{Kantor1942}, which precisely measures the transportation cost from one point to another through a carefully defined cost function.
To improve problem solvability and transportation plan characteristics, entropy regularization \cite{comot2018} is introduced to make transportation smoother and more uniform, avoiding over-concentration. As a core metric of optimal transport, the Wasserstein distance can accurately characterize distribution differences \cite{wdisapp2018}, laying a foundation for subsequent applications. In supervised learning, optimal transport can serve as an innovative loss function to capture subtle distribution differences or combine with cross-entropy loss to enhance model performance, and it also ensures model fairness through data adjustment and optimizes evaluation indicators \cite{Fairnessot2019, Fairnessot2020, Fairnessmodelot2020}. 
In unsupervised learning, optimal transport supports the design of encoding and decoding in Variational Autoencoders \cite{vae2020, vaeot2018}. WGAN uses it to replace traditional divergence for optimizing GAN performance \cite{gan2ot2017}, then SNGAN \cite{r8} and BigGAN further improve training stability based on this. However, these generative models ignore inter-channel correlations when processing color images, easily leading to color deviations.

Quaternion is an excellent tool used in many  methods of image processing. 
Two new nonlocal self-similarity (NSS)-based quaternion matrix completion (QMC) and quaternion tensor completion (QTC) algorithms are presented to deal with the color image problems in  \cite{s24}, where nonlocal self-similarity technique is introduced to reduce the low-rank prior requirement. To preserve the structure of color channels when restoring color images, a quaternion-based weighted nuclear norm minimization (WNNM) method is proposed \cite{zengtieyong_q}. Furthermore, a novel quaternion-based weighted schatten p-norm minimization (WSNM) model is proposed for tackling color image restoration problems with preliminary theoretical convergence analysis \cite{liu_q}.
A novel method named QLNM-QQR based on quaternion Qatar Riyal decomposition  and quaternion $L_{1,2}$-norm is proposed to reduces computational complexity by avoiding the need for calculating the QSVD of large quaternion matrices \cite{jifei_q}.
Then a new method called quaternion nuclear norm minus Frobenius norm minimization is employed into color image reconstruction by capturing  RGB channels relationships comprehensively in \cite{k6}. For color image inpainting, a quaternion matrix completion method through untrained QCNN is proposed  \cite{k7}. Quaternion circulant matrix can be block-diagonalized into 1-by-1 block and 2-by-2 block matrices by permuted discrete quaternion Fourier transform matrix, this shows that the inverse of a quaternion cyclic matrix can be determined quickly and efficiently \cite{Mg_q}, laying the groundwork for the rest of the image study. 

Quaternion was first introduced into GAN and applied to color image generation in 2021 \cite{r2}. 
Then authors  \cite{r11} directly applied quaternion convolution to SNGAN in order to produce high-quality color images. But none of these directly address the challenge of expanding from low-dimensional data to high-dimensional data in the quaternion domain. Until 2024,  a novel quaternion deconvolution operation was proposed to construct a new quaternion generative adversarial networks (QGAN) in \cite{r12}. This QGAN improves the stability of initial training of the generative model and is applied to color image inpainting. However, the loss function utilized by these quaternion generative adversarial networks struggles to effectively measure the distributional relationship between datasets, which can not guarantee the ultimate stability of the model. 

This study defines a novel QWD to address issues of existing color image generation models, such as ignoring correlations among color channels and lacking theories for measuring distribution differences between datasets. By leveraging the quaternion convex set separation theorem and quaternion Farkas lemma, the study derives QWD's dual theory and the strong duality form of quaternion linear programming problems, solving the non-differentiability and high computational complexity of traditional distribution distance metrics in deep learning frameworks. QWD can directly characterize distribution differences between color images represented by quaternions, laying a foundation for constructing WQGAN.

\begin{figure}[ht]
	\centering
	\setlength{\abovecaptionskip}{-0cm}
	\setlength{\belowcaptionskip}{-0cm}
	\vspace{-0.1cm}
	\includegraphics[width=1\textwidth,height=0.6\textwidth]{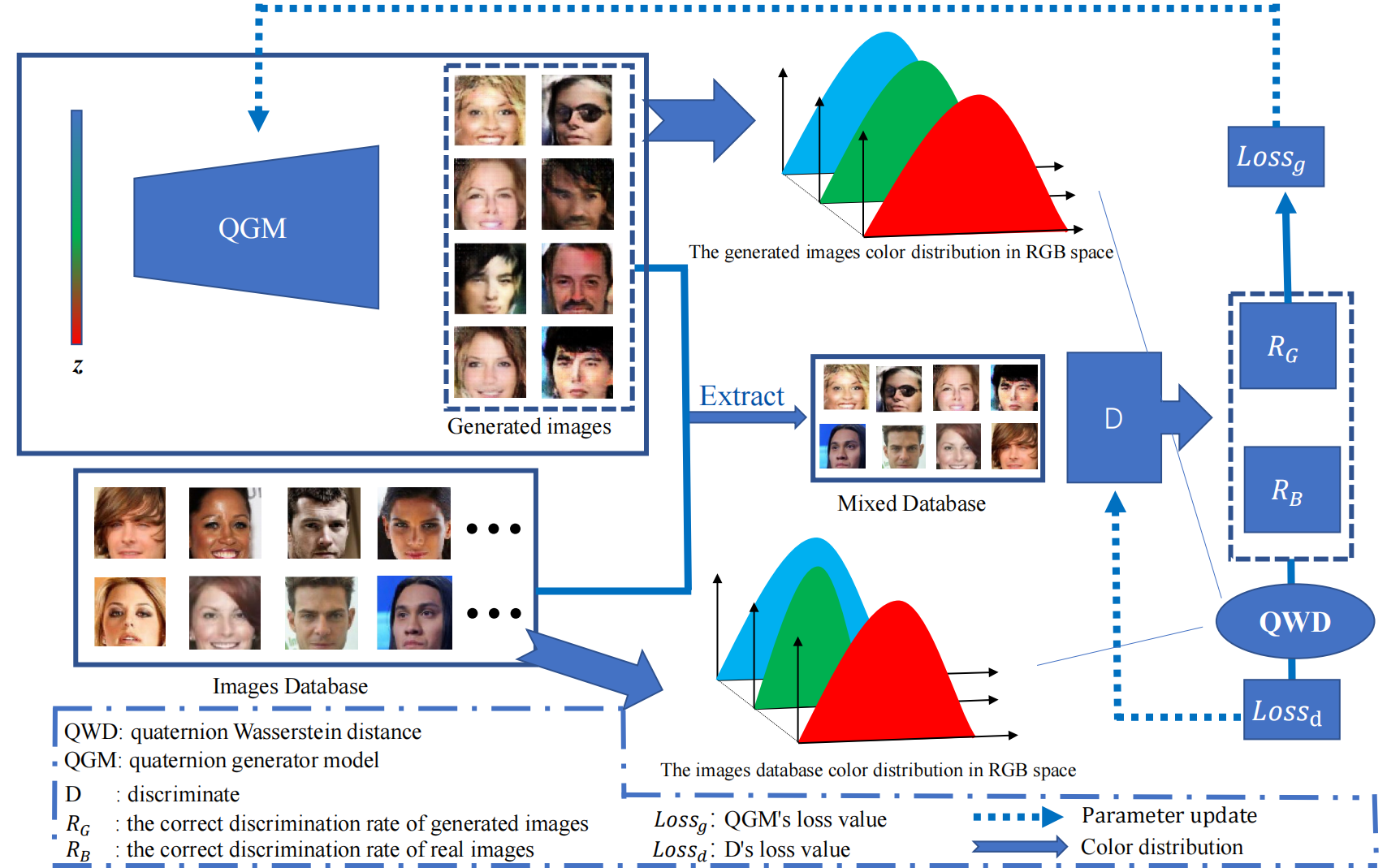}
	\caption{The architecture of WQGAN.}
	\label{p:WQGANarchitecture}
\end{figure}

The architecture of WQGAN is given in Figure \ref{p:WQGANarchitecture}.  Compared to QGAN, we change the training objective function of the model from cross-entropy function to QWD function, which effectively improves the generative ability of the image generation model.
The main contribution is in three folds:
\begin{itemize}
	\item QWD is a new breakthrough in applying quaternions to probability distributions. First we formulate the quaternion linear programming problem and give its quaternioni dual form. Then we give a dual form of the discrete form of QWD. 
	This provides a deeper theoretical foundation for the application of Wasserstein distance in quaternion deep learning networks.
	
	\item  A novel Wasserstein quaternion generative adversarial network is proposed based on the dual form of QWD. Through mathematical theory, we give a measuring method for quaternion probability distribution about two color image datasets. Then propose a novel quaternion adversarial generative network model using dual form of QWD.
	
	\item The experiment results show that WQGAN can generate high quality images faster compared to QGANs as well as WGAN. The images generated by WQGAN at 50k iterations have an FID score $20.1314$ lower than WGAN and $12.2806$ lower than QGAN.   
\end{itemize}

The rest of the paper is organized as follows. In Section \ref{sec:pre}, we recall preliminary work about quaternion and distance function. In Section \ref{sec:dual}, we derive a quaternion dual form, encompassing both weak duality and strong duality. In Section \ref{sec:wqgan}, we propose a novel WQGAN model by deriving quaternion measure integrals and the corresponding cost functions. In section \ref{sec:exs}, we demonstrate numerical examples to illustrate the superiority of proposed model. In Section \ref{sec:con}, we present conclusion.

\section{Preliminaries} \label{sec:pre}
In this section, we present the definitions of   quaternion random variable and quaternion Wasserstain distance. 
\subsection{Quaternion and quaternion random variables}
The quaternion, which extends real and complex  numbers naturally, was first introduced in 1843 \cite{r1}. 
Let $\mathbb{N}_{+}$, $\mathbb{R}$ and $\mathbb{Q}$ denote the sets of positive integers,  real numbers and quaternions, respectively. 
A quaternion $\qq \in \mathbb{Q} $ is defined as  
$
\qq=q^{(0)}+q^{(1)}\iq+q^{(2)}\jq+q^{(3)}\kq,
$
$q^{(j)}\in\mathbb{R}~ (j = 0,1,2,3)$.   Three imaginary units $\iq, ~\jq$ and $\kq$  satisfy
$
\iq^{2}=\jq^{2}=\kq^{2}=\iq\jq\kq=-1.
$
The absolute value of $\qq$ is defined by 
$|\qq|=\sqrt{|q^{(0)}|^2+|q^{(1)}|^2+|q^{(2)}|^2+|q^{(3)}|^2}.$
If $q^{(0)}=0$, then such $\qq$ is called a purely imaginary quaternion. If $q^{(j)}>0~ (j = 0,1,2,3)$, then $\qq$ is called positive quaternion, denoted by $\qq>0$.   If $q^{(j)}\ge 0~ (j = 0,1,2,3)$, then $\qq$ is called nonnegative  quaternion, denoted by $\qq\ge 0$. We say two quaternions $\qq_1$ and $\qq_2$ satisfy $\qq_1 >\qq_2$ ($\qq_1\ge\qq_2$) if $\qq_1-\qq_2> 0$ ($\qq_1-\qq_2\ge 0$).

Let $\mathbb{R}^n$ and $\mathbb{Q}^n$ with  $n\in\mathbb{N}_{+}$ denote the $n$-dimensional linear spaces of real and quaternion vectors, respectively. 
Referring to \cite{r2}, we call $\Xq$  a quaternion $\mathbb{Q}^n$-valued random variable if $\Xq=X^{(0)}+X^{(1)}\iq+X^{(2)}\jq+X^{(3)}\kq$, where $X^{(j)}=[X_i^{(j)}]_{i=1}^n$ $(j = 0,1,2,3)$ are $\mathbb{R}^n$-valued random variables which are pairwise independent.
Recall that the Euclidean norm of real vector $X^{(j)}$ is $\|X^{(j)}\|=\sqrt{\sum_{i=1}^n|X_i^{(j)}|^2}$.
The Euclidean norm of quaternion vector $\Xq$ is  defined by
$$\|\Xq\|=\sqrt{\|X^{(0)}\|^2+\|X^{(1)}\|^2+\|X^{(2)}\|^2+\|X^{(3)}\|^2}:=\sqrt{\sum\limits_{i=1}^n\sum\limits_{j=0}^3|X_i^{(j)}|^2}.$$
If each entry of $\Xq$ is positive, then $\Xq$ is called positive quaternion vector, denoted by $\Xq>0$. If each entry of $\Xq$ is nonnegative, then $\Xq$ is called nonnegative quaternion vector, denoted by $\Xq\ge 0$. 
We say two quaternion vectors $\Xq_1$ and $\Xq_2$ satisfy $\Xq_1>\Xq_2$ ($\Xq_1\ge\Xq_2$) if $\Xq_1-\Xq_2> 0$ ($\Xq_1-\Xq_2\ge 0$). 

\begin{definition}\label{def:pfd}
	A quaternion $\mathbb{Q}^n$-valued random variable $\Xq=X^{(0)}+X^{(1)}\iq+X^{(2)}\jq+X^{(3)}\kq$ is called discrete random variable if $X^{(j)}\in\mathbb{R}^n~(j = 0,1,2,3)$ are discrete $\mathbb{R}^n$-valued random variables. Let $p(X^{(j)})$ be the probability mass function of $X^{(j)}$, then the probability mass function (p.m.f. for short) of $\Xq$ is defined by:
	\begin{equation}
		\pq(\Xq)= p(X^{(0)}) \times p(X^{(1)}) \times p(X^{(2)}) \times p(X^{(3)}).
	\end{equation}
	We call the set of  $\Xq$  satisfying $\pq(\Xq)>0$ the support of $\pq$. 
\end{definition}

\subsection{Quaternion Wasserstein distance}	
In previous QGANs  \cite{r12}, the loss function employed was the quaternion cross-entropy loss function. This loss function quantifies the dissimilarity between two quaternion distributions by utilizing the Kullback-Leibler (KL)  and Jensen-Shannon (JS) divergences:
\begin{equation}
	KL(P_r||P_g)=\sum_{i=1}^{n} p(\xq_i)\log\frac{p(\xq_i)}{q(\xq_i)},
\end{equation}
\begin{equation}
	JS(P_r||P_g)=\frac{1}{2}\left( KL(P_r||M) + KL(P_g||M)\right), M = \frac{1}{2}\left(P_r+P_g\right),
\end{equation}
where $P_r,P_g$ are two probability distributions and $p(\xq), q(\xq)$ are corresponding probability mass functions.

When two probability distributions are very far apart with no overlap, the KL divergence becomes meaningless, while the JS divergence yields a constant value $\log2$ \footnote{$JS(P_r||P_g)=\frac{1}{2}\left( KL(P_r||P_g) + KL(P_g||P_r)\right)=\frac{1}{2}\sum p(\xq)\log\left(\frac{p(\xq)}{p(\xq)+q(\xq)}\right)+\frac{1}{2}\sum q(\xq)\log\left(\frac{q(\xq)}{p(\xq)+q(\xq)}\right)+\log2$. When $p(\xq)\rightarrow0, q(\xq)\rightarrow1,$ $JS(P_r||P_g)=\log2.$}. In learning algorithms, this can result in a gradient of zero at that point. This phenomenon leads to the vanishing gradient problem about learning \cite{r4}.

For any two quaternion vectors $\qq_1,\qq_2\in\mathbb{Q}^n$, i.e., $\qq_1=q_1^{(0)}+q_1^{(1)}\iq+q_1^{(2)}\jq+q_1^{(3)}\kq,\qq_2=q_2^{(0)}+q_2^{(1)}\iq+q_2^{(2)}\jq+q_2^{(3)}\kq$, where $q_i^{(j)}\in \mathbb{R}^n$ $(i=1,2;j=0,1,2,3)$, we define the distance between $\qq_1$ and $\qq_2$ by 
$$\|\qq_1-\qq_2\|=\sqrt{\sum\limits_{j=0}^3\left\|q_1^{(j)}-q_2^{(j)}\right\|^2}.$$

Let $\mathcal{B}(\mathbb{Q}^n)$ be the Borel set on $\mathbb{Q}^n$ induced by the distance $\|\cdot\|$. Meanwhile, let $\mathcal{M}(\mathbb{Q}^n)$ be the set of all probability measures on Borel set $\mathcal{B}(\mathbb{Q}^n)$. Then the QWD is defined as 
\begin{equation}\label{defw0}
	\mathcal{W}(P_r,P_g)=\inf_{\Phi\in\Pi(P_r,P_g)}\int\int_{\mathbb{Q}^n\times \mathbb{Q}^n}\|u-v\|\Phi(d u,d v)=\inf_{\Phi\in\Pi(P_r,P_g)}\mathbb{E}_{(\xq,\yq)\sim\Phi}[\|\xq-\yq\|], 
\end{equation}
for all $P_r,P_g \in  \mathcal{M}(\mathbb{Q}^n)$, 
where $\Pi(P_r,P_g)$ is set of all couplings measures $\Phi$ of $P_r$ and $P_g$, i.e.,
\begin{equation}\label{cons1}
	\Phi(A\times \mathbb{Q}^n)=P_r(A),\quad \Phi(\mathbb{Q}^n\times A )=P_g(A),\quad \forall\, A\in \mathcal{B}(\mathbb{Q}^n).
\end{equation}
In terms of probability mass functions, let $\pq_r$, $\pq_g$ and $\gamma$ be the probability mass functions corresponding to $P_r$, $P_g$ and  $\Phi$, respectively, then
equation \eqref{defw0} becomes
\begin{equation}\label{defw}
	\mathcal{W}(P_r,P_g)=\inf_{\gamma\in\Pi(P_r,P_g)}
	\left\{
	\sum_{\xq\in S_r}\sum_{\yq \in S_g} \Vert \xq-\yq\Vert\gamma(\xq,\yq)\right\},
\end{equation}
where $S_r$ and $S_g$ are the supports of $\pq_r$ and $\pq_g$, respectively.
The constraint \eqref{cons1} can be rewritten as
\begin{equation}\label{cons2}
	\begin{split}
		\sum_{\yq\in S_g } \gamma(\xq,\yq)=\pq_r(\xq),\quad \forall \, \xq\in \mathbb{Q}^n,\\
		\sum_{\xq\in S_r } \gamma(\xq,\yq)=\pq_g(\yq),\quad \forall \, \yq\in \mathbb{Q}^n.
	\end{split}
\end{equation}

We remark that  the computation of the original QWD necessitates evaluating the distance between the generated distribution and the real data distribution throughout the entire data space. However, this is not feasible in practice due to the high dimensionality and continuity of data spaces, which renders the computation of such distances exceedingly intricate.
Additionally, the original QWD is generally non-differentiable, posing challenges in applying backpropagation algorithms for training within deep learning frameworks. 
Therefore, we need to propose a discrete form of QWD and derive its dual form (in Section \ref{sec:dual}).  Applying this dual form, we are able to  construct a new WQGAN (in Section \ref{sec:wqgan}).

\section{Quaternion linear programming problem and  dual form} \label{sec:dual}
In this section, we  present a quaternion linear programming (QLP) problem for QWD and derive its dual form. 

The QLP problem we aim to address  is
\begin{equation} \label{e:lpp}
	\min_{\boldsymbol{\Gamma}\in \mathbb{Q}^{m}}\left\{|C^\top\boldsymbol{\Gamma}|\mid \Upsilon\boldsymbol{\Gamma}=\bq,~\boldsymbol{\Gamma} \geq \zeroq \right\}, \boldsymbol{b} \in \mathbb{Q}^{n}, C\in \mathbb{R}_+^{m}, \Upsilon\in \mathbb{R}^{n \times m}.
\end{equation}
Here, the notation $^T$ denotes the transpose operator and $\mathbb{R}^{n \times m}$
the set of all $n\times m$ real matrices.

At first,  we present a weak dual form of this QLP problem. Let $\mathbb{R}_+^n$ denote the set of all nonnegative $n$-dimensional real vectors. 
\begin{theorem}
	\label{th_dual}
	Given  quaternion vector $\boldsymbol{b} \in \mathbb{Q}^{n}$, nonnegative real vector $C\in \mathbb{R}_+^{m}$ and real matrix $ \Upsilon\in \mathbb{R}^{n \times m}$, there is
	\begin{equation}\label{e:w-dual}
		\max_{y\in \mathbb{R}^{n}}\left\{|\bq^\top y|\mid \Upsilon^\top y \leq C,~ \bq^\top y \geq \zeroq\right\}\leq\min_{\boldsymbol{\Gamma\in \mathbb{Q}^{m}}}\left\{|C^\top\boldsymbol{\Gamma}|\mid \Upsilon\boldsymbol{\Gamma}=\bq,~\boldsymbol{\Gamma}\geq\zeroq\right\}.
	\end{equation}
\end{theorem}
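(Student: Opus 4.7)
The plan is to mimic the classical LP weak-duality argument: take an arbitrary dual-feasible $y\in\mathbb{R}^{n}$ and an arbitrary primal-feasible $\boldsymbol{\Gamma}\in\mathbb{Q}^{m}$, and show directly that $|\bq^\top y|\leq |C^\top\boldsymbol{\Gamma}|$. Since $y$ and $\boldsymbol{\Gamma}$ are arbitrary among feasible points, this pointwise inequality yields the desired $\max\leq\min$.

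First I would substitute the equality constraint $\Upsilon\boldsymbol{\Gamma}=\bq$ into $\bq^\top y$, obtaining
\begin{equation*}
\bq^\top y=(\Upsilon\boldsymbol{\Gamma})^\top y=\boldsymbol{\Gamma}^\top\Upsilon^\top y.
\end{equation*}
Because $\Upsilon$ and $y$ are real, the vector $\Upsilon^\top y\in\mathbb{R}^{m}$ has real entries and satisfies $\Upsilon^\top y\leq C$ componentwise. The key observation is that real scalars commute with quaternions, so for each $i$, if $\Gamma_{i}\geq 0$ (nonnegative quaternion) and $(\Upsilon^\top y)_{i}\leq C_{i}$ (real), then $(C_{i}-(\Upsilon^\top y)_{i})\Gamma_{i}$ is a product of a nonnegative real and a nonnegative quaternion, hence is a nonnegative quaternion. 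Summing over $i$ gives
\begin{equation*}
\boldsymbol{\Gamma}^\top\Upsilon^\top y\;\leq\;\boldsymbol{\Gamma}^\top C\;=\;C^\top\boldsymbol{\Gamma},
\end{equation*}
in the quaternion partial order defined in Section~\ref{sec:pre}, where the last equality again uses that real scalars commute. Combined with the dual feasibility requirement $\bq^\top y\geq\zeroq$, we obtain
\begin{equation*}
\zeroq\;\leq\;\bq^\top y\;\leq\;C^\top\boldsymbol{\Gamma}.
\end{equation*}

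The final step is to pass from this componentwise inequality between two nonnegative quaternion scalars to the inequality of their absolute values. Writing $\alpha:=\bq^\top y$ and $\beta:=C^\top\boldsymbol{\Gamma}$, each real component satisfies $0\leq\alpha^{(j)}\leq\beta^{(j)}$ for $j=0,1,2,3$, so $(\alpha^{(j)})^{2}\leq(\beta^{(j)})^{2}$; summing over $j$ and taking square roots yields $|\alpha|\leq|\beta|$, i.e.\ $|\bq^\top y|\leq |C^\top\boldsymbol{\Gamma}|$. Taking the supremum over feasible $y$ on the left and the infimum over feasible $\boldsymbol{\Gamma}$ on the right produces \eqref{e:w-dual}.

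The argument has no substantive obstacle — this is weak duality, not strong duality, and the noncommutativity of quaternion multiplication never bites because every scalar coming from $\Upsilon$, $y$, or $C$ is real and therefore commutes with the quaternion entries of $\boldsymbol{\Gamma}$ and $\bq$. The only place that requires genuine care is the last step: the implication ``$0\leq\alpha\leq\beta$ as quaternions $\Rightarrow |\alpha|\leq|\beta|$'' relies on the componentwise definitions of the quaternion order and the Euclidean absolute value, and is false if one replaces componentwise order with any other natural notion. The heavier machinery (the quaternion convex set separation theorem and the quaternion Farkas lemma invoked in the introduction) is reserved for the strong-duality direction, where one must exhibit a dual feasible $y$ matching the primal optimum.
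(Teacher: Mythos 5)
Your proposal is correct and follows essentially the same route as the paper's proof: substitute $\Upsilon\boldsymbol{\Gamma}=\bq$ to get $\bq^\top y=\boldsymbol{\Gamma}^\top\Upsilon^\top y$, then use $\boldsymbol{\Gamma}\geq\zeroq$ together with the realness of $\Upsilon^\top y$ and $C$ to bound $|\bq^\top y|$ by $|C^\top\boldsymbol{\Gamma}|$. The only difference is that you spell out the final step (componentwise $0\leq\alpha\leq\beta$ implies $|\alpha|\leq|\beta|$, which genuinely needs the constraint $\bq^\top y\geq\zeroq$), whereas the paper asserts the inequality in one line; your added care there is a strict improvement, not a different argument.
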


\begin{proof}
	Suppose $\hat{\boldsymbol{\Gamma}}$ is a solution of the QLP problem \eqref{e:lpp}. That is, $\Upsilon\hat{\boldsymbol{\Gamma}}=\boldsymbol{b}.$  
	Then for any $y\in \mathbb{R}^{m}$ satisfying $y^{\top}\Upsilon \leq C^{\top}$ and $y^\top \bq \geq \zeroq$,  there holds $y^\top \Upsilon\hat{\boldsymbol{\Gamma}} = y^\top \bq$.
	So there is 

	\begin{equation}\label{dual}
		|y^\top \bq| =  |y^\top \Upsilon\hat{\boldsymbol{\Gamma}}|  \leq  |C^\top\hat{\boldsymbol{\Gamma}}|.
	\end{equation}
	The inequality of  \eqref{dual} holds because $\hat{\boldsymbol{\Gamma}}\ge 0$ and both $\Upsilon^\top y$ and $C$ are real vectors. Then the inequality \eqref{e:w-dual} follows.
\end{proof}

Next, we will explore the strong dual expression of the QLP problem \eqref{e:w-dual}. Before we can do that, we need to derive several new properties of convex quaternion sets. 

\subsection{Separation theorem of convex quaternion vector sets}
At first, we introduce the definition and properties of convex quaternion vector sets.
\begin{definition}
	\label{def:qconvex} 
	The quaternion vector set  $\Sb \subset \mathbb{Q}^n$ is convex  if  for any two quaternion vectors $\qq_1,~\qq_2 \in \Sb$, $\qq_1\alpha  + \qq_2(1-\alpha) \in \Sb$, where $\alpha \in [0,1]$. 
\end{definition}

\begin{lemma}\label{lem:intersetsum}
	Let $\Sb_1$ and $\Sb_2$ be two convex quaternion vector sets. Then
	\begin{enumerate}[(a)]
		\item $\Sb_1 \cap \Sb_2$ is a convex quaternion vector set;
		\item and $\Sb_1 \pm \Sb_2 = \{\qq_1 \pm \qq_2~|~q_1 \in \Sb_1, \qq_2 \in \Sb_2\}$ is a convex quaternion vector set. 
	\end{enumerate}	 
\end{lemma}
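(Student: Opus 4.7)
The plan is to verify both parts directly from Definition 2.2, exploiting the fact that the scalars $\alpha$ and $1-\alpha$ are real numbers in $[0,1]$, so they commute with quaternion vectors and the usual convex-combination manipulations carry over verbatim from the classical real or complex setting.

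For part (a), I would pick arbitrary $\qq, \qq' \in \Sb_1 \cap \Sb_2$ and any $\alpha \in [0,1]$. Since $\qq, \qq' \in \Sb_1$, convexity of $\Sb_1$ gives $\qq\alpha + \qq'(1-\alpha) \in \Sb_1$; the identical argument applied to $\Sb_2$ places the same combination in $\Sb_2$. Hence it lies in $\Sb_1\cap\Sb_2$, which is what Definition 2.2 requires.

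For part (b), I would take two arbitrary elements of $\Sb_1 \pm \Sb_2$, say $\qq_1 \pm \qq_2$ and $\qq_1' \pm \qq_2'$ with $\qq_1,\qq_1' \in \Sb_1$ and $\qq_2,\qq_2' \in \Sb_2$, together with $\alpha \in [0,1]$. The key algebraic step is the regrouping
\begin{equation*}
(\qq_1 \pm \qq_2)\alpha + (\qq_1' \pm \qq_2')(1-\alpha) = \bigl(\qq_1\alpha + \qq_1'(1-\alpha)\bigr) \pm \bigl(\qq_2\alpha + \qq_2'(1-\alpha)\bigr),
\end{equation*}
which is legitimate because $\alpha\in\mathbb{R}$ distributes freely over addition in $\mathbb{Q}^n$. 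Convexity of $\Sb_1$ and $\Sb_2$ then places the two parenthesized terms in $\Sb_1$ and $\Sb_2$ respectively, so the whole expression belongs to $\Sb_1 \pm \Sb_2$.

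The argument is essentially classical, and I do not expect a genuine obstacle. The only point worth flagging explicitly is that Definition 2.2 writes the convex combination with scalars on the \emph{right}, whereas quaternion multiplication is noncommutative in general. I would therefore note that since the coefficients $\alpha$ and $1-\alpha$ are real, left and right multiplication by them coincide for every quaternion vector, so the regrouping above is unambiguous and no noncommutativity issue interferes. This lemma should be viewed as a stepping stone for the convex set separation theorem and the quaternion Farkas lemma invoked later to upgrade Theorem 3.1 to strong duality.
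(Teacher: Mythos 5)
Your proof is correct and is exactly the straightforward calculation the paper alludes to (the paper itself only states that both assertions "can be proved by the straightforward calculation" and omits the details). Your explicit remark that the real scalars $\alpha$ and $1-\alpha$ commute with quaternion vectors, so the right-multiplication convention in Definition 2.2 causes no noncommutativity issue, is a worthwhile clarification but does not change the route.
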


\begin{proof}
	Two assertions can be proved by the straightforward calculation. 
\end{proof}

Now we present the projection theorem of convex quaternion vector sets.
\begin{theorem}\label{t:unique}
	Let $\Sb \subset \mathbb{Q}^{n}$ be a nonempty convex quaternion vector set and quaternion vector $\yq \in \mathbb{Q}^{n}\setminus\Sb$. Then there exists a unique point $\hat{\xq} \in \Sb$ that has the minimum distance from $\yq$. 
\end{theorem}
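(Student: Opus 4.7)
The plan is to adapt the classical Hilbert space projection theorem to the quaternion setting. The key observation is that $\mathbb{Q}^n$ equipped with the norm defined in the preliminaries is isometrically the real Euclidean space $\mathbb{R}^{4n}$ (each quaternion vector being identified with the $4n$-tuple of its real components), and in particular it is complete and its norm satisfies the parallelogram identity
$$\|\uq+\vq\|^{2}+\|\uq-\vq\|^{2}=2\|\uq\|^{2}+2\|\vq\|^{2},\qquad \uq,\vq\in\mathbb{Q}^{n}.$$
Since existence of a minimizer in an arbitrary convex set can fail, I will implicitly treat $\Sb$ as closed (the standard hypothesis in this kind of projection result, and what the subsequent separation argument will need). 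With these pieces in hand, the proof is the quaternion transcription of the usual argument.

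For existence, I would set $d:=\inf_{\xq\in\Sb}\|\yq-\xq\|$, which is finite since $\Sb$ is nonempty, and pick a minimizing sequence $\{\xq_{k}\}\subset\Sb$ with $\|\yq-\xq_{k}\|\to d$. Convexity of $\Sb$ (Definition of convex quaternion vector set, taking $\alpha=1/2$) gives $(\xq_{k}+\xq_{l})/2\in\Sb$, so $\|\yq-(\xq_{k}+\xq_{l})/2\|\geq d$. Applying the parallelogram identity to $\uq=\yq-\xq_{k}$ and $\vq=\yq-\xq_{l}$ yields
$$\|\xq_{k}-\xq_{l}\|^{2}=2\|\yq-\xq_{k}\|^{2}+2\|\yq-\xq_{l}\|^{2}-4\|\yq-(\xq_{k}+\xq_{l})/2\|^{2},$$
and the right-hand side is dominated by $2\|\yq-\xq_{k}\|^{2}+2\|\yq-\xq_{l}\|^{2}-4d^{2}\to 0$. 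Hence $\{\xq_{k}\}$ is a Cauchy sequence in the complete space $\mathbb{Q}^{n}$; its limit $\hat{\xq}$ lies in $\Sb$ (closedness) and satisfies $\|\yq-\hat{\xq}\|=d$ by continuity of the norm.

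For uniqueness, I would suppose that two points $\hat{\xq}_{1},\hat{\xq}_{2}\in\Sb$ both attain the minimum distance. The midpoint $(\hat{\xq}_{1}+\hat{\xq}_{2})/2$ again lies in $\Sb$, so $\|\yq-(\hat{\xq}_{1}+\hat{\xq}_{2})/2\|\geq d$, and the same parallelogram computation with $\uq=\yq-\hat{\xq}_{1}$ and $\vq=\yq-\hat{\xq}_{2}$ gives $\|\hat{\xq}_{1}-\hat{\xq}_{2}\|^{2}\leq 2d^{2}+2d^{2}-4d^{2}=0$, whence $\hat{\xq}_{1}=\hat{\xq}_{2}$.

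The proof has no deep obstacle once the parallelogram identity is available; the only real content is verifying that identity for the quaternion Euclidean norm and confirming that the midpoint operation $(\xq_{k}+\xq_{l})/2$ is a legitimate convex combination in the sense of Definition of convex quaternion vector set (which it is, by taking $\alpha=1/2$). The mild subtlety is that multiplication by the scalar $1/2$ on the left versus the right does not matter because $1/2\in\mathbb{R}$ commutes with every quaternion, so the midpoint is unambiguous. If the stated theorem is read with the implicit hypothesis that $\Sb$ is closed (which the later separation theorem will in any case require), the argument above gives both existence and uniqueness and can be used as a foundation for the quaternion separation theorem and the quaternion Farkas lemma mentioned in the introduction.
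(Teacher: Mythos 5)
Your proposal is correct, and your uniqueness argument (midpoint in $\Sb$ by convexity, then the parallelogram identity forcing the two minimizers to coincide) is exactly the one the paper uses. The only real divergence is in the existence step: the paper intersects $\Sb$ with a large closed ball $\{\yq\}+\beta\Bq$ and invokes Weierstrass on the resulting compact set, whereas you run the classical minimizing-sequence argument, using the parallelogram identity to show the sequence is Cauchy and then completeness of $\mathbb{Q}^{n}\cong\mathbb{R}^{4n}$ to extract the limit. Both are standard; the paper's version is shorter in finite dimensions, while yours would survive verbatim in an infinite-dimensional Hilbert setting. You are also right to flag that closedness of $\Sb$ is needed and is not in the theorem's hypotheses: the paper's proof silently assumes it too (the set $\Dq=\Sb\cap(\{\yq\}+\beta\Bq)$ is asserted to be closed, which requires $\Sb$ closed), and the later separation theorem (Theorem 3.5) does add the closedness hypothesis explicitly, so your reading of the intended statement matches how the result is actually used downstream.
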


\begin{proof}
	What we need to prove is that there exists  a unique solution to the optimal minimization problem
	\begin{equation}\label{e:omp}
		\|\hat{\xq}-\yq\|=\inf\limits_{\xq \in \Sq}\big\{\| \xq - \yq \|\big\}.
	\end{equation}
	Let $\Bq = \{\zq \mid \; \| \zq \|  \leq 1 ,\zq \in \mathbb{Q}^{n}   \}$. Then we have a closed set $\Dq = \Sb \cap (\{\yq\} + \beta \Bq )$, where $\beta$ is a sufficiently large positive real number. Obviously, there exists a point $\hat{\xq} \in \Dq \subset \Sb$ such that the continuous function $f(\xq)= \; \| \xq - \yq \| $ reaches the  minimum. Denote $r=f(\hat{\xq})$. 
	
	Suppose there exists another point $\tilde{\xq} \in \Sq $ such that $f(\tilde{\xq})= r$.
	Let $\dot{\xq} = \frac{1}{2}(\hat{\xq} + \tilde{\xq})$, that is,  a convex combination of $\hat{\xq}$ and  $\bar{\xq}$.
	Since $\hat{\xq},~\tilde{\xq}\in\Sq$ and $\Sq$ is convex, there is $\dot{\xq}\in\Sq$. Thus, we have   $f(\dot{\xq})\ge f(\hat{\xq}) =f(\tilde{\xq})= r$.
	However,  
	$$\| \dot{\xq} - \yq \| =\frac{1}{2}\left\| \hat{\xq} - \yq + \tilde{\xq} - \yq\right\|   
	\leq \frac{1}{2}\| \hat{\xq} - \yq \| + \frac{1}{2}\|\tilde{\xq} - \yq \| = r.
	$$
	That implies $\| \dot{\xq} - \yq \| \; = r$. According to the parallelogram rule, 
	$$
	{\| \hat{\xq} - \tilde{\xq} \|}^2 = 
	2{\| \hat{\xq} - \yq \|}^2 + 2{\| \tilde{\xq} - \yq \|}^2 - 4{\| \dot{\xq} - \yq \|}^2 = 0. 
	$$
	The uniqueness is proven, i.e., $\hat{\xq} = \tilde{\xq}$.
\end{proof}

Based on the above projection theorem, we can present the separation theorem of  convex quaternion vector sets and the quaternion Farkas lemma. Let $\mathbb{Q}_+^n$ and  $\mathbb{Q}_-^n$   denote the sets of all nonnegative and nonpositive  $n$-dimensional quaternion vectors, respectively.
\begin{definition}\label{def:hyperplane }
	Suppose  $\Sq_1$ and $\Sq_2$ are two convex quaternion vector sets. If there exists a real vector  $p \in \mathbb{R}^{n}$ and a quaternion $\alphaq \in \mathbb{Q}$ such that  $\Sq_1\subset \{\xq\in \mathbb{Q}^{n}~|~p^{T}\xq\leq\alphaq\}$ and $\Sq_2\subset \{\xq\in \mathbb{Q}^{n}~|~p^{T}\xq\ge \alphaq\}$, then the quaternion hyperplane $H=\{\xq\in \mathbb{Q}^{n}~|~p^{T}\xq = \alphaq\}$ separates $\Sq_1$ and $\Sq_2$.
\end{definition}

\begin{theorem} \label{th:css}
	Let $\Sq \subset \mathbb{Q}^{n}$ be a closed convex set including $\zeroq$. For each quaternion vector $\yq \in \mathbb{Q}_{+}^n\! \cup\! \mathbb{Q}_{-}^n$ and $\yq\notin\Sq$, there exists a real vector $p \in \mathbb{R}^{n}$ and a quaternion $\alphaq \in \mathbb{Q}$ such that $p^{T}\xq \leq \alphaq < p^{T}\yq$ holds for every $\xq \in \Sq$.
\end{theorem}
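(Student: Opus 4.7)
The plan is to adapt the classical separating-hyperplane argument from Euclidean space to the quaternion setting, using Theorem~\ref{t:unique} as the quaternion analogue of the Hilbert-space projection theorem. The strategy has two movements: first obtain a ``variational'' inequality in the aggregated real inner product on $\mathbb{R}^{4n}$, then squeeze it into a componentwise separation by a single real vector $p$ using the sign restriction on $\yq$.

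For the first movement, I would apply Theorem~\ref{t:unique} to produce the unique nearest point $\hat{\xq}\in\Sq$ to $\yq$, set $\pq := \yq-\hat{\xq}\in\mathbb{Q}^{n}$ (which is nonzero because $\yq\notin\Sq$), and decompose every quantity into four real blocks: $\xq\leftrightarrow (x^{(0)},x^{(1)},x^{(2)},x^{(3)})$, and similarly for $\hat{\xq}$, $\yq$, with $p^{(j)} := y^{(j)} - \hat{x}^{(j)} \in \mathbb{R}^{n}$. For any $\xq\in\Sq$ and $t\in(0,1]$, convexity gives $\hat{\xq}+t(\xq-\hat{\xq})\in\Sq$, and minimality together with the fact that the quaternion norm coincides with the Euclidean norm on $\mathbb{R}^{4n}$ gives
\begin{equation*}
\bigl\|\hat{\xq}+t(\xq-\hat{\xq})-\yq\bigr\|^{2}\;\ge\;\|\hat{\xq}-\yq\|^{2}.
\end{equation*}
Expanding, dividing by $t$, and letting $t\to 0^{+}$ yields the aggregated real inequality $\sum_{j=0}^{3}\langle p^{(j)},\, x^{(j)}-\hat{x}^{(j)}\rangle\le 0$ for every $\xq\in\Sq$, which plays the role of the classical separating hyperplane.

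For the second movement, I would build the required $(p,\alphaq)$ from the four real blocks $p^{(j)}$. A natural choice is $p:=\sum_{j=0}^{3}p^{(j)}\in\mathbb{R}^{n}$ (with an overall sign flip in the $\mathbb{Q}_{-}^{n}$ case) and $\alphaq := \alpha^{(0)}+\alpha^{(1)}\iq+\alpha^{(2)}\jq+\alpha^{(3)}\kq$ with $\alpha^{(j)}:=p^{\top}\hat{x}^{(j)}$. Then the componentwise weak inequality $p^{\top}\xq\le\alphaq$ follows by projecting the variational inequality onto each slot $j$, while the strict inequality $\alphaq<p^{\top}\yq$ must reduce, slot by slot, to $p^{\top}(y^{(j)}-\hat{x}^{(j)})>0$; this is where the hypothesis $\yq\in\mathbb{Q}_{+}^{n}\cup\mathbb{Q}_{-}^{n}$ enters, since a common sign on all four $y^{(j)}$ prevents cancellation among the $p^{(j)}$ when they are aggregated into a single real $p$. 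The ``non-triviality'' $\pq\neq\zeroq$ ensures at least one $p^{(j)}$ is nonzero, and the sign hypothesis propagates this to each of the four strict inequalities.

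The main obstacle is exactly this reduction from a quaternion-valued separator to a real-valued one with strict componentwise separation in all four slots. In the classical real-Hilbert setting the analogous step is automatic because the separator already lives in the same real space as the set, whereas here the canonical separator $\pq=\yq-\hat{\xq}$ lives in $\mathbb{Q}^{n}$; the sign restriction on $\yq$ is doing nontrivial work, because without it even one-dimensional degenerate configurations (for instance $\Sq=\{\zeroq\}$, $\yq$ aligned with a single imaginary unit) resist being separated strictly in every one of the four components by a real $p$. I would therefore split the argument into the two cases $\yq\in\mathbb{Q}_{+}^{n}$ and $\yq\in\mathbb{Q}_{-}^{n}$, verify the strict inequalities slot by slot, and invoke the properties of convex quaternion vector sets from Lemma~\ref{lem:intersetsum} only as bookkeeping.
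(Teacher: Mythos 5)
Your first movement is sound and is genuinely different from the paper's route: the paper does not derive a variational inequality at all, but instead assumes ``without loss of generality'' that $\Sq$ is a coordinate box $[0,s_0]^n+[0,s_1]^n\iq+[0,s_2]^n\jq+[0,s_3]^n\kq$ and takes $p$ to be a standard basis vector picking out one escaping coordinate. The gap in your proposal is the second movement, and it is not repairable as described. The aggregated inequality $\sum_{j=0}^{3}\langle p^{(j)},x^{(j)}-\hat{x}^{(j)}\rangle\le 0$ is a single real inequality; it does not ``project onto each slot,'' and even if it did, the slot-$j$ inequality would involve $p^{(j)}$, not your aggregated $p=\sum_k p^{(k)}$. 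Concretely, take $n=1$, $\Sq=\{t(1-\iq):t\in[0,1]\}$ (closed, convex, contains $\zeroq$) and $\yq=1+\iq\in\mathbb{Q}_+^1\setminus\Sq$. The projection is $\hat{\xq}=\zeroq$, so $p^{(0)}=p^{(1)}=1$, $p^{(2)}=p^{(3)}=0$, your $p=2$ and $\alphaq=\zeroq$; but $p\,\xq=2t-2t\iq\not\le\zeroq$ for $t>0$, so even the weak componentwise inequality fails, and the strict inequality $\alphaq<p\,\yq$ fails in the $\jq$ and $\kq$ slots because $p\,y^{(2)}=p\,y^{(3)}=0$. A common sign on the $y^{(j)}$ does not give a common sign to the residuals $p^{(j)}=y^{(j)}-\hat{x}^{(j)}$, so the cancellation you hope to exclude actually occurs.

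The example also shows that the obstacle you flag is structural, not just a defect of your particular choice of $(p,\alphaq)$: with the paper's componentwise definition of $\qq_1<\qq_2$, any slot in which $y^{(j)}=0$ forces $0=p^{\top}0\le\alpha^{(j)}<p^{\top}y^{(j)}=0$ whenever $\zeroq\in\Sq$, which is impossible. So strict separation in all four components by a single real $p$ cannot be obtained along your route (nor, in fact, does the paper's own construction achieve it: its $\alphaq$ agrees with $p^{\top}\yq$ in three of the four slots, so strictness there holds only in one component). To complete your argument you would have to either weaken the strict inequality to a distinguished component, or separate with a quaternion-valued functional built from all four blocks $p^{(j)}$ rather than a single real vector.
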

\begin{proof}
	According to Theorem \ref{t:unique}, there exits a unique $\hat{\xq} \in \Sq$ such that \eqref{e:omp} holds. Such $\hat{\xq}$ must belong to $\mathbb{Q}_{+}^n \cup\mathbb{Q}_{-}^n$   since $\Sq$ is a closed convex set including $\mathbf{0}$ and $\yq$ is a nonnegative or nonpositive quaternion vector that does not belong to $\Sq$.
	Without loss of generality, we assume that $\yq,~\hat{\xq}\in\mathbb{Q}_+^n$ and $\Sq = [0,s_0]^n+[0,s_1]^n\iq+ [0,s_2]^n\jq+[0,s_3]^n\kq$ $:=\{\xq=x_0+x_1\iq+x_2\jq+x_3\kq~|~x_j\in [0,s_j]^n$\}, where $s_j\ge 0$ and $[0,s_j]^n:=\{x=[x_i]_{i=1}^n\in\mathbb{R}^n~|~x_i\in [0,s_j]\}$, $j=0,\ldots,3$. 
	
	Denote $\yq=[\yq_i]_{i=1}^n$ and $\hat{\xq}=[\hat{\xq}_i]_{i=1}^n$. Since $\yq\notin\Sq$, there  exists one element of $\yq$, say $\yq_i$, which does not belong to the quaternion set   $[0,s_0]+[0,s_1]\iq+ [0,s_2]\jq+[0,s_3]\kq$. Let $\yq_i = y_{i0} + y_{i1}\iq + y_{i2}\jq + y_{i3}\kq$. Then there at least one holds that $y_{i0}> s_0,~y_{i1}>s_1,~y_{i2}>s_2$ or $y_{i3}>s_3$. For simplicity, we assume that $y_{i0}>s_0,~y_{i1}\in [0,s_1],~y_{i2}\in [0,s_2],~ y_{i3}\in [0,s_3]$. Then  $
	\hat{\xq}_i = s_0 + y_{i1}\iq + y_{i2}\jq + y_{i3}\kq$. Set $\alphaq = \frac{s_0 + y_{i0}}{2} +y_{i1}\iq + y_{i2}\jq + y_{i3}\kq$.
	Then, with choosing $p=[p_k]_{k=1}^n\in\mathbb{R}^n$ with $p_i=1$ and $p_k=0$ if $k\ne i$.
	Then we have  $p^{T}\hat{\xq} \leq \alphaq < p^{T}\yq$.  On the other hand, for every $\xq\in\Sq$ there holds $p^T\xq\le p^T\hat{\xq}$. 
	So we have proved the assertion  $p^{T}\xq \leq \alphaq < p^{T}\yq$ holds for every $\xq\in\Sq$.
	The other cases can be proved in the same way.
\end{proof}

\begin{theorem} (Quaternion Farkas lemma)
	\label{th_Farkas}
	Suppose  $ \Upsilon \in\mathbb{R}^{n \times m}$ is a real matrix with $\Upsilon_{ij} \Upsilon_{ik}>0$, $ j,~k\in\{1,2,...,n\}$, and $\mathbf{b}\in\mathbb{Q}^m$ is a nonnegative quaternion vector. There  exactly holds one of the following assertions:
	\begin{enumerate}[(a)]
		\item \label{item-a} There exists $\boldsymbol{\Gamma}\in\mathbb{Q}_+^{n}$ such that $\Upsilon \boldsymbol{\Gamma}=\bq$;
		\item \label{item-b} There exists ${y}\in\mathbb{R}^{n}$ such that ${\Upsilon}^{T}{y}\leq  0$ and ${\bq}^{T}{y}>\zeroq.$	
	\end{enumerate}	 
\end{theorem}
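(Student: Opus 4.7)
The plan is to establish the dichotomy in two stages: first verify that (a) and (b) are mutually exclusive, then show that the failure of (a) forces (b) via the separation machinery developed above.

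For exclusivity, I would argue directly. If both assertions held simultaneously, then
\[
\bq^T y \;=\; (\Upsilon\boldsymbol{\Gamma})^T y \;=\; \boldsymbol{\Gamma}^T(\Upsilon^T y) \;=\; \sum_{j}\Gamma_j\,(\Upsilon^T y)_j.
\]
Each summand is the product of a nonnegative quaternion $\Gamma_j$ and a nonpositive real $(\Upsilon^T y)_j$, hence a nonpositive quaternion; summing gives $\bq^T y\leq\zeroq$, which contradicts $\bq^T y>\zeroq$ from (b).

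For the main direction, assume (a) fails and define
\[
\Sq \;=\; \{\Upsilon\boldsymbol{\Gamma}\mid\boldsymbol{\Gamma}\in\mathbb{Q}_+^{m}\}.
\]
I would first check that $\Sq$ is a closed convex quaternion vector set containing $\zeroq$: convexity follows from Definition \ref{def:qconvex} because $\mathbb{Q}_+^m$ is closed under convex real combinations and $\Upsilon$ is linear, $\zeroq\in\Sq$ is obtained with $\boldsymbol{\Gamma}=\zeroq$, and closedness comes from viewing $\Sq$ as the real-linear image of the finitely generated polyhedral cone $\mathbb{Q}_+^{m}\subset\mathbb{R}^{4m}$, where the row-sign hypothesis $\Upsilon_{ij}\Upsilon_{ik}>0$ keeps the image inside a single quaternion orthant. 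Because (a) fails, $\bq\notin\Sq$, and since $\bq$ is a nonnegative quaternion vector, Theorem \ref{th:css} supplies $p\in\mathbb{R}^n$ and $\alphaq\in\mathbb{Q}$ with $p^T\xq\leq\alphaq<p^T\bq$ for every $\xq\in\Sq$.

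Taking $\xq=\zeroq\in\Sq$ immediately gives $\zeroq\leq\alphaq$, so $p^T\bq>\zeroq$. To pin down the sign of $\Upsilon^T p$, I would use an unboundedness argument: for each $j\in\{1,\ldots,m\}$ and each $t>0$, the choice $\boldsymbol{\Gamma}=t\eq_j\in\mathbb{Q}_+^{m}$ reduces $p^T\Upsilon\boldsymbol{\Gamma}\leq\alphaq$ to $t(\Upsilon^T p)_j\leq\alphaq$; letting $t\to\infty$ forces $(\Upsilon^T p)_j\leq 0$. Hence $\Upsilon^T p\leq 0$, and setting $y=p$ produces the vector demanded by (b). The main obstacle will be the closedness of $\Sq$ together with the correct reconciliation of the quaternion-valued separating level $\alphaq$ with the componentwise real inequality $\Upsilon^T p\leq 0$; Theorem \ref{th:css} guarantees $p$ is real-valued, but verifying that the row-sign hypothesis on $\Upsilon$ is exactly what places $\Sq$ inside a single orthant so the separation theorem is applicable is the delicate structural step.
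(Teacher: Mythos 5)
Your proposal is correct and takes essentially the same route as the paper: the identical direct computation for mutual exclusivity, the same cone $\Sq=\{\Upsilon\boldsymbol{\Gamma}\mid \boldsymbol{\Gamma}\geq\zeroq\}$, the same appeal to the separation result of Theorem \ref{th:css} with $\xq=\zeroq$ giving $\alphaq\geq\zeroq$, and the same boundedness/unboundedness argument forcing $\Upsilon^{T}y\leq 0$. If anything, you are more explicit than the paper about the closedness of $\Sq$ and about where the row-sign hypothesis on $\Upsilon$ is needed for the separation theorem to apply.
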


\begin{proof} Firstly, we assume that \eqref{item-a} holds.  That means there exists a quaternion vector $\hat{\boldsymbol{\Gamma}} \geq 0$ such that $\Upsilon \hat{\boldsymbol{\Gamma}}=\bq$. For any  $y\in\mathbb{R}^n$ satisfying ${\Upsilon}^{T} y \leq 0$, we have $\bq^{T} y = \hat{\boldsymbol{\Gamma}}^{T} {\Upsilon}^{T} y \leq \zeroq$. This is clearly inconsistent with ${\bq}^{T}y>0$ under the above assumption. Then we have proved that \eqref{item-b} does not hold.
	
	Secondly, we assume that \eqref{item-a} does not hold. Denote $\Sq = \{\zq \mid \zq = \Upsilon \boldsymbol{\Gamma}, \boldsymbol{\Gamma} \geq \zeroq\}$. Clearly,  $\Sq$ is a nonempty convex quaternion vector set and $\bq \notin \Sq$. According to Theorem \ref{th:css}, there exists a real vector $y \in \mathbb{R}^{m}$ and a quaternion $\alphaq \in \mathbb{Q}$ such that $y^{T}\zq \leq \alphaq < y^{T}\bq$ holds for every $\zq\in\Sq$. It is clear that $\alphaq \geq \zeroq$ because $\zq=\zeroq \in \Sq$. So $\bq^{T}y > \zeroq$ and $\alphaq \geq y^{T}\zq = y^{T} {\Upsilon} \boldsymbol{\Gamma} = \boldsymbol{\Gamma}^{T} {\Upsilon}^{T}y$.  Since $\alphaq$ is bounded by $y^T\bq$, the inequality  $\boldsymbol{\Gamma}^{T} {\Upsilon}^{T}y\le \alphaq$ holds for every nonnegative quaternion vector $\boldsymbol{\Gamma}$ if and only if ${\Upsilon}^{T}y\le 0$. That is,  \eqref{item-b} holds.
	
	With all above results in hand, we have completed the proof.  
\end{proof}

\subsection{Quaternion strong dual theorem}

Based on Theorem \ref{th_Farkas}, we present the strong duality form of the QLP problem \eqref{e:lpp}. That is, the inequality in \eqref{e:w-dual} is improved into the equality. 

\begin{theorem}
	\label{th_dual2}
	Given  quaternion vector $\boldsymbol{b} \in \mathbb{Q}^{n}$, nonnegative real vector $C\in \mathbb{R}_+^{m}$ and real matrix $ \Upsilon\in \mathbb{R}^{n \times m}$, there is
	\begin{equation}\label{e:s-dual}
	\max_{y\in \mathbb{R}^{n}}\left\{|\bq^\top y|\mid \Upsilon^\top y \leq C, \bq^\top y \geq \zeroq\right\} = \min_{\boldsymbol{\Gamma} \in \mathbb{Q}^{m}}\left\{|C^\top\boldsymbol{\Gamma}|\mid \Upsilon\boldsymbol{\Gamma}=\bq,\boldsymbol{\Gamma}\geq \zeroq\right\}.
	\end{equation}
\end{theorem}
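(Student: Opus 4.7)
The plan is to deduce \eqref{e:s-dual} from the weak duality already established in Theorem \ref{th_dual} together with the quaternion Farkas lemma (Theorem \ref{th_Farkas}), following the classical LP strong duality pattern. Since Theorem \ref{th_dual} gives $\max\le\min$, it suffices to exhibit, for the optimal primal value $v^\ast = \min\{|C^\top\boldsymbol{\Gamma}|\mid \Upsilon\boldsymbol{\Gamma}=\bq,\ \boldsymbol{\Gamma}\ge\zeroq\}$, a dual-feasible $y\in\mathbb{R}^n$ whose objective $|\bq^\top y|$ matches $v^\ast$.

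Fix an optimal primal vector $\boldsymbol{\Gamma}^\ast$ and set $\qq^\ast := C^\top\boldsymbol{\Gamma}^\ast$, so $\qq^\ast\ge\zeroq$ and $|\qq^\ast|=v^\ast$. For each small $\delta\in(0,1)$, form the augmented system
\begin{equation*}
\tilde{\Upsilon}\boldsymbol{\Gamma}=\tilde{\bq}_\delta,\ \ \boldsymbol{\Gamma}\ge\zeroq,\qquad \tilde{\Upsilon}:=\begin{pmatrix}\Upsilon\\ C^\top\end{pmatrix}\in\mathbb{R}^{(n+1)\times m},\ \ \tilde{\bq}_\delta:=\begin{pmatrix}\bq\\ (1-\delta)\qq^\ast\end{pmatrix}\in\mathbb{Q}^{n+1}.
\end{equation*}
Any solution $\boldsymbol{\Gamma}$ would be primal-feasible with $|C^\top\boldsymbol{\Gamma}|=(1-\delta)v^\ast<v^\ast$, contradicting the optimality of $\boldsymbol{\Gamma}^\ast$. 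Since $\tilde{\bq}_\delta\ge\zeroq$, Theorem \ref{th_Farkas} applied to $(\tilde{\Upsilon},\tilde{\bq}_\delta)$ produces $\tilde{y}=(y',t)\in\mathbb{R}^{n+1}$ with
\begin{equation*}
\Upsilon^\top y'+tC\le 0\quad\text{and}\quad \bq^\top y'+t(1-\delta)\qq^\ast>\zeroq.
\end{equation*}

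To extract the dual certificate, I would first pin down the sign of $t$. Pairing the first inequality with $\boldsymbol{\Gamma}^\ast\ge\zeroq$ and using $\Upsilon\boldsymbol{\Gamma}^\ast=\bq$, $C^\top\boldsymbol{\Gamma}^\ast=\qq^\ast$ yields $\bq^\top y'+t\qq^\ast\le\zeroq$. Subtracting the strict inequality then forces $t\delta\qq^\ast<\zeroq$, and since $\qq^\ast\ge\zeroq$ and $\delta>0$, this gives $t<0$. Setting $y:=-y'/t\in\mathbb{R}^n$, the two displays above become $\Upsilon^\top y\le C$ and $\bq^\top y\ge(1-\delta)\qq^\ast\ge\zeroq$, so $y$ is dual-feasible with $|\bq^\top y|\ge(1-\delta)v^\ast$. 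Letting $\delta\to 0^+$ and combining with weak duality closes the gap and yields \eqref{e:s-dual}.

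I expect the main obstacle to be verifying that the hypotheses of Theorem \ref{th_Farkas} truly transfer to the augmented pair $(\tilde{\Upsilon},\tilde{\bq}_\delta)$, in particular the row sign-pattern condition $\tilde{\Upsilon}_{ij}\tilde{\Upsilon}_{ik}>0$ after appending the row $C^\top$, which may require strict positivity of $C$ or a perturbation argument on $(\Upsilon,C)$ followed by a limit. A second delicate point is the manipulation of quaternion order inequalities under scaling by the real number $-1/t<0$: flipping $\Upsilon^\top y'+tC\le 0$ into $\Upsilon^\top y\le C$ and converting $\bq^\top y'+t(1-\delta)\qq^\ast>\zeroq$ into a lower bound on $\bq^\top y$ must each be justified from the componentwise order on $\mathbb{Q}^n$ defined in the preliminaries, and the degenerate case $\qq^\ast=\zeroq$ (where $v^\ast=0$) needs to be handled separately.
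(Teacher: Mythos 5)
Your proposal is essentially the paper's own proof: both augment $\Upsilon$ with the cost row, perturb the optimal value slightly so that the augmented system becomes infeasible, invoke the quaternion Farkas lemma (Theorem \ref{th_Farkas}) to extract a multiplier $(y',t)$, show the cost-row multiplier has the correct sign, normalize by it to obtain a dual-feasible point, and pass to the limit against weak duality. The only differences are cosmetic sign conventions (the paper appends $-C^{\top}$ and uses right-hand side $-\hat{\zq}+\boldsymbol{\epsilon}$, whereas your nonnegative $\tilde{\bq}_\delta$ is actually the more careful choice given the hypotheses of Theorem \ref{th_Farkas}), and the caveats you flag --- transferring the row sign-pattern condition to the augmented matrix and the degenerate case $v^{\ast}=0$ --- are gaps the paper's own proof leaves unaddressed as well.
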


\begin{proof}
	Suppose the minimum value of the right hand of \eqref{e:s-dual} is achieved at $\hat{\boldsymbol{\Gamma}}$ and denote $\hat{\zq} := C^{T} \hat{\boldsymbol{\Gamma}}$. 
	As $C\in\mathbb{R}_+^m$ and $\hat{\boldsymbol{\Gamma}}\ge \zeroq$, $\hat{\zq}\ge\zeroq$.
	Define
	$$ \hat{\Upsilon}:=\begin{bmatrix}\Upsilon\\-C^{T}\end{bmatrix}~\text{and}~ \hat{\bq}_{\boldsymbol{\epsilon}}:=\begin{bmatrix}\bq\\ -\hat{\zq}+\boldsymbol{\epsilon}\end{bmatrix},$$
	where $\boldsymbol{\epsilon}\in\mathbb{Q}$ is a positive quaternion.
	When $|\hat{\zq}-\boldsymbol{\epsilon}|<|\hat{\zq}|$, 
	there is no  $\boldsymbol{\Gamma}\geq \zeroq$ satisfying both $ \Upsilon\boldsymbol{\Gamma}=\bq$ and $C^T\boldsymbol{\Gamma}=\hat{\zq}-\boldsymbol{\epsilon}$.
	It implies that  $\hat{\Upsilon} \boldsymbol{\Gamma} \neq \hat{\bq}_{\boldsymbol{\epsilon}}$ for any $\boldsymbol{\Gamma} \geq 0$. 
	This means the assertion \eqref{item-a} of Lemma \ref{th_Farkas}  does not hold. So the assertion \eqref{item-b}  holds according to such quaternion Farkas lemma. 
	That is, there exists a  real vector  $\left.\hat{y}=\left[\begin{matrix}y_0\\ \beta\end{matrix}\right.\right ] $ such that $\hat{\Upsilon}^{T} \hat{y} \leq 0$ and $\hat{\bq}_{\boldsymbol{\epsilon}}^{T}\hat{y}>\zeroq$. This is equivalent to $\Upsilon^{T}y_0 \leq \beta C$ and $\bq^{T} y_0 >\beta(\hat{\zq} - \boldsymbol{\epsilon})$.

	Next, we need to prove $\beta > 0$.
	Firstly, $\hat{\bq}_{\boldsymbol{\epsilon}}$ is limited by $\hat{\bq}:=\left[\begin{matrix}\bq\\ -\hat{\zq}\end{matrix}\right]$ as $\boldsymbol{\epsilon}$ tends to $\zeroq$.
	At the limit, one can get   $\hat{\Upsilon} \boldsymbol{\Gamma} = \hat{\bq}$. 
	Since  $\hat{\Upsilon}^{T} \hat{y}\leq 0$, we have $\hat{\bq}^{T}\hat{y}= \boldsymbol{\Gamma}^T\hat{\Upsilon}^T\hat{y}\leq 0$. 
	However, we can know $0 <\hat{\bq}_{\boldsymbol{\epsilon}}^{T}\hat{y} = \hat{\bq}^{T}\hat{y} +\beta \boldsymbol{\epsilon}$. That implies  $\beta>0$.

	By dividing such $\beta$ from  both sides of $\hat{\Upsilon}^{T} \hat{y} \leq 0$ and $\hat{\bq}_{\boldsymbol{\epsilon}}^{T}\hat{y}>\zeroq$, we have  $\Upsilon^{T}\frac{y_0}{\beta}  \leq  C, \bq^{T} \frac{y_0}{\beta} >(\hat{\zq} - \boldsymbol{\epsilon})$. 
	So $\max_{y}\left\{|\bq^\top y|\mid \Upsilon^\top y\leq\ C, \bq^\top y \geq \zeroq \right\} >|(\hat{\zq} - \boldsymbol{\epsilon})|>|\hat{\zq}| -| \epsilon|$. 
	Because $\max_{\yq}\left\{|\bq^\top y|\mid|\Upsilon^\top y|\leq C, \bq^\top y \geq \zeroq\right\} \leq |\hat{\zq}|$, 
	there is  $\max_{y}\left\{|\bq^\top y|\mid \Upsilon^\top y\leq C , \bq^\top y \geq0\right\} = |\hat{\zq}| = \min_{\boldsymbol{\Gamma}}\left\{|C^\top\boldsymbol{\Gamma}|\mid \Upsilon\boldsymbol{\Gamma}=\bq,\boldsymbol{\Gamma}\geq \zeroq\right\} , \boldsymbol{\Gamma} \in \mathbb{Q}^{m} ,y\in \mathbb{R}^{n}$.
\end{proof}

A form of strong duality has been developed for the  QLP problem. It will be applied to  WQGAN.

\section{Dual form of quaternion Wasserstein distance with application to Wasserstein quaternion  generative model} \label{sec:wqgan}

In this section,  we present the dual form of QWD and use it to construct a new color image generation network, called WQGAN. 

Recalling from \eqref{defw} and \eqref{cons2},   the QWD between  $P_r$ and $P_g$   is defined by
\begin{subequations}
	\label{e:argJS}
	\begin{align}
		\label{e:argJS-1}
		&\mathcal{W}[P_r,P_g]=\inf_{\gamma\in \Pi(P_r, P_g)} \left\{
		\sum_{\xq\in S_r}\sum_{\yq \in S_g} c(\xq,\yq)\gamma(\xq,\yq)\right\}\\
		\label{e:argJS-2} 
		&\textrm{s.t. }~\sum_{\yq\in S_g}\gamma(\xq,\yq)=\pq_r(\xq),\\
		&\quad ~~\sum_{\xq\in S_r} \gamma(\xq,\yq)=\pq_g(\yq),
	\end{align}
\end{subequations}
where $c(\xq,\yq)$ denotes the distance  between quaternion vectors $\xq$ and $\yq$, for instance, $c(\xq,\yq)=\|\xq-\yq\|$.
This novel QWD can be applied to the optimal transportation between these two distributions of quaternion vectors, as shown in Figure \ref{p:earth_move}.

\begin{figure}[ht]
	\centering
	\setlength{\abovecaptionskip}{-0cm}
	\setlength{\belowcaptionskip}{-0cm}
	\vspace{-0.1cm}
	\includegraphics[width=0.5\textwidth,height=0.15\textwidth]{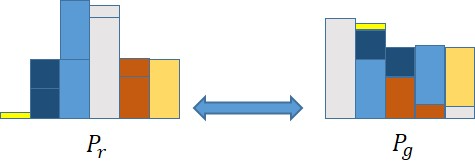}
	\caption{
		The optimal transportation between  $P_r$ and $P_g$.
	}
	\label{p:earth_move}
	
\end{figure}

\subsection{Dual form of quaternion  Wasserstein distance}

Firstly, we provide the  dual form of QWD. 
The discrete forms of   $\gamma(\xq,\yq)$ and $c(\xq,\yq)$ are
\begin{subequations}\label{e:gamma-C}
	\begin{align}
		&\left.\Gamma=\left[\gamma(\xq_1,\yq_1),\gamma(\xq_1,\yq_2),\cdots,\gamma(\xq_2,\yq_1),\gamma(\xq_2,\yq_2),\cdots,\gamma(\xq_n,\yq_1),\gamma(\xq_n,\yq_2),\cdots\right.\right]^T,\\
		&\left.C=\left[c(\xq_1,\yq_1),c(\xq_1,\yq_2),\cdots,c(\xq_2,\yq_1),c(\xq_2,\yq_2),\cdots,c(\xq_n,\yq_1),c(\xq_n,\yq_2),\cdots\right.\right]^T,
	\end{align}
\end{subequations}
where $\xq_1,\xq_2,\ldots$ and $\yq_1,\yq_2,\ldots$ constitute the supports $S_r$ and $S_g$, respectively. 
Then we transform the constraint $\gamma \in \Pi (P_r,P_g)$ into  the following discrete form
\begin{equation}\label{e:UpsilonGammab}
	\Upsilon\Gamma=\bq,
\end{equation}
where 
\begin{equation}\label{e:Upsilon}
	\left.{\Upsilon}=\left(\begin{array}{ccc|ccc|c|ccc|c}	1&1&\cdots&0&0&\cdots&\cdots&0&0&\cdots&\cdots\\0&0&\cdots&1&1&\cdots&\cdots&0&0&\cdots&\cdots\\	\vdots&\vdots&\ddots&\vdots&\vdots&\ddots&\ddots&\vdots&\vdots&\ddots&\ddots\\0&0&\cdots&0&0&\cdots&\cdots&1&1&\cdots&\cdots\\\vdots&\vdots&\ddots&\vdots&\vdots&\ddots&\ddots&\vdots&\ddots&\ddots&\ddots\\\hline1&0&\cdots&1&0&\cdots&\cdots&1&0&\cdots&\cdots\\0&1&\cdots&0&1&\cdots&\cdots&0&1&\cdots&\cdots\\\vdots&\vdots&\ddots&\vdots&\vdots&\ddots&\ddots&\vdots&\vdots&\ddots&\ddots\\0&0&\cdots&0&0&\cdots&\cdots&0&0&\cdots&\cdots\\\vdots&\vdots&\ddots&\vdots&\ddots&\ddots&\vdots&\vdots&\ddots&\ddots&\ddots\end{array}\right.\right)
\end{equation} 
is an indicator matrix and 
\begin{equation}\label{e:bqdis}
  \bq=\left(\pq_r(\xq_1),\pq_r(\xq_2),\cdots,\pq_r(\xq_n),\cdots,\pq_g(\yq_1),\pq_g(\yq_2),\cdots,\pq_g(\yq_n),\cdots\right)^T
\end{equation}
is the splice of long vectors of $\pq_r(\xq)$ and $\pq_g(\xq)$.
So the discrete QWD is described as
\begin{equation*}
	\min_{\boldsymbol{\Gamma}}\left\{\langle\boldsymbol{\Gamma},C\rangle\mid\Upsilon \boldsymbol{\Gamma}=\boldsymbol{b},\boldsymbol{\Gamma}\geq0\right\},
\end{equation*}
where $\langle\boldsymbol{\Gamma},C\rangle = \left| C^T \boldsymbol{\Gamma} \right|, \boldsymbol{\Gamma} \in \mathbb{Q}_+^{\infty}, C\in \mathbb{R}_+^{\infty}$.

From the quaternion strong dual theorem (Theorem \ref{th_dual2}), we get
\begin{equation}\label{e:dualformQ}
	\max_{F} \left\{\langle\boldsymbol{b},F \rangle\mid\Upsilon^\top F\leq C,   \langle\boldsymbol{b},F \rangle \geq 0 \right\}
	= 
	\min_{\boldsymbol{\Gamma}}\left\{\langle\boldsymbol{\Gamma},C\rangle\mid\ \Upsilon\boldsymbol{\Gamma}=\boldsymbol{b},\boldsymbol{\Gamma}\geq0\right\}.
\end{equation}
Note that $\boldsymbol{b}$ is the splice of two parts.  Similarly,  we can also write $F$ as
\begin{equation*}
	\left.F=\left[f(\xq_1),f(\xq_2),\cdots,f(\xq_n),\cdots,g(\yq_1),g(\yq_2),\cdots,g(\yq_n),\cdots\right.\right]^{T},
\end{equation*}
where $f$ and $g$ are two real-valued functions. 
Then we  obtain
\begin{equation}\label{e:bFinner}
	\langle\boldsymbol{b},F \rangle=\left| \sum_n\pq(\xq_n)f(\xq_n)+\sum_n\qq(\yq_n)g(\yq_n)\right|
\end{equation}
and
the condition $\Upsilon^\top F\leq C$ becomes
\begin{equation}\label{e:conscond}
	\left.\left(
	\begin{array}{ccccc|ccccc}
		1&0&\cdots&0&\cdots&{1}&0&\cdots&0&\cdots\\
		1&0&\cdots&0&\cdots&{0}&1&\cdots&0&\cdots\\
		{\vdots}&{\vdots}&\ddots&{\vdots}&\ddots&{\vdots}&{\ddots}&{\vdots}&\ddots\\
		\hline{0}&1&\cdots&0&\cdots&{1}&0&\cdots&0&\cdots\\
		{0}&1&\cdots&0&\cdots&{0}&1&\cdots&0&\cdots\\
		\vdots&\vdots&\ddots&\vdots&\ddots&\vdots&\vdots&\ddots&\vdots&\ddots\\
		\hline{\vdots}&{\vdots}&\ddots&\vdots&\ddots&{\vdots}&\vdots&\ddots&\vdots&\ddots\\\hline{0}&{0}&\cdots&{1}&\cdots&{1}&{0}&\cdots&{0}&\cdots\\{0}&{0}&\ddots&{1}&\ddots&{0}&1&\ddots&{0}&\ddots\\{\vdots}&\vdots&\ddots&\vdots&\ddots&{\vdots}&\vdots&\ddots&\vdots&\ddots\\\hline{\vdots}&\vdots&\ddots&\vdots&\ddots&{\vdots}&\vdots&\ddots&\vdots&\ddots
	\end{array}\right.\right)
	\left.\left(\begin{array}{c}f(\xq_1)\\f(\xq_2)\\\vdots\\f(\xq_n)\\\vdots\\g(\yq_1)\\g(\yq_2)\\\vdots\\g(\yq_n)\\\vdots\end{array}\right.\right)
	\leq
	\left.\left(\begin{array}{c}c(x_1,y_1)\\c(x_1,y_2)\\\vdots\\\hline c(x_2,y_1)\\c(x_2,y_2)\\\vdots\\\hline\vdots\\\hline c(x_n,y_1)\\c(x_n,y_2)\\\vdots\\\hline\vdots\end{array}\right.\right).
\end{equation}

Rewriting \eqref{e:bFinner} and \eqref{e:conscond} into  the forms of quaternion random variables, 	we have the dual form of the QWD  \eqref{e:argJS} as

\begin{equation}\label{e:dualQWD}
\begin{split}
\mathcal{W}[P_r,P_g]
&= \max_{\substack{f,g:\mathbb{Q}^n \to \mathbb{R}}} \LARGE\{ \sum_{\xq \in S_r \cup S_g} \left[ \pq(\xq)f(\xq) + \qq(\xq)g(\xq) \right] \ |  \\
& \quad  f(\xq) + g(\yq) \leq \|\xq - \yq\|,\ \forall \xq, \yq \in \mathbb{Q}^n \LARGE\}.
\end{split}
\end{equation}

\subsection{Wasserstein Quaternion Generation Adversarial Neural Network}	

By defining $g(\xq):=-f(\xq)$ in \eqref{e:dualQWD}, we obtain the dual form of the optimal transportation cost \eqref{e:argJS}. Specifically, in the following sections, we extend $||\xq-\yq||$ as $c(\xq,\yq)$,

\begin{equation}\label{e:qwdg=-f}
\begin{split}
\mathcal{W}[P_r,P_g]
&= \max_{f: \mathbb{Q}^n \to \mathbb{R}} \LARGE\{ \sum_{\xq \in S_r \cup S_g} \left[ \pq(\xq)f(\xq) - \qq(\xq)f(\xq) \right] \ |  \\
& \quad  f(\xq) - f(\yq) \leq c(\xq,\yq),\ \forall \xq, \yq \in \mathbb{Q}^n \LARGE\}.
\end{split}
\end{equation}

This form is what we are ultimately looking for.
Since $P_r,P_g$ are both probability distributions, we can write equation \eqref{e:qwdg=-f} in sampling form:
\begin{equation}\label{e:final_loss}
	\mathcal{W}[P_r,P_g]=\max_{\|f\|_{\textrm{Lip}}\leq1}
	\left\{\mathbb{E}_{\xq\sim P_r}[f(\xq)]-\mathbb{E}_{\xq\sim P_g}[f(\xq)] \right\}.
\end{equation}
Here, $\|f\|_{\textrm{Lip}}$ stands for the Lipschitz norm of $f$. 
If the QWD is reduced to the  Wasserstein distance, then 
equation \eqref{e:final_loss} is consistent with Theorem $5.10$ in \cite{r16chen}.

In practice, it is  unreasonable to use  Equation \eqref{defw} to measure the images of two databases.   In Equation \eqref{e:final_loss},  we use a functional function  that satisfies the Lipschitz condition to analyze the images and apply QWD to simplify the calculation process. Compared with the Wasserstein distance of the real number, QWD is directly applied to the  color images for analysis, which ensures the integrity of the model theory.

Based on equation \eqref{e:final_loss}, we utilize the dual form of QWD to guide the updates of the discriminators and generators of QGANs  \cite{r2,r11,r13}.  The generator is constructed by quaternion modules and both the generated  and real color images are represented by quaternion matrices. The weights of the discriminator are within a certain interval to ensure the satisfaction of Lipschitz condition ($\|f\|_L\leq1$). The most important thing is that the difference between the discriminative results of the generated color images and the real color images from database is computed by the value of the loss function, which  represents the QWD between them.

The specific training method is given in Algorithm \ref{alg:C}.
At the beginning of training, the model parameters 
are initialized randomly. 
The discriminator parameters are optimized through the inner loop (lines 3-9).   
The discriminator optimization method uses the dual form of QWD, as shown in the formula \eqref{e:final_loss}. 
In this operation, the images generated by the quaternion generator along with the real images extracted from the training database form a quaternion training dataset for the discriminator. The difference between the correct discrimination rate of the real images and the correct discrimination rate of the generated images constitutes the loss value of the discriminator. The Root Mean Square Propagation optimization algorithm (RMSProp) \cite{tihi12} is used to update the discriminator parameters. 
In order to ensure that the discriminator parameters satisfy the Lipschitz condition, we also  constrain the updated discriminator parameters into a specified range, limiting the extreme values of the parameters. The parameters that exceed the specified range will be reassigned. 
Similarly, the quaternion generator parameters are updated by the outer loop (lines 2-13).
In each iteration, new quaternion noise vectors are randomly generated and provided to the quaternion generator to generate color images. The generated images are then discriminated by the updated discriminator. The correct discrimination rate is the loss value of the quaternion generator, and RMSProp is used to optimize the parameters.

\begin{algorithm}
	\caption{Wasserstein Quaternion Generation Adversarial Neural Network Algorithm}
	\label{alg:C}
	\begin{algorithmic}[1] 
		\STATE  Initialize discriminator parameter $d$, generator parameter $\theta$, number of iterations $Iter$, number of samples per batch $m$, number of discriminator parameter updates per iteration $k$, learning rate $\sigma$,  and  clipping parameter $c$.
		\FOR{$i=1,2,\cdots,Iter$}
		\FOR{$j=1,2,\cdots,k$}
		\STATE Select $m$ noise sample points: $\zq_{1}, \zq_{2},..., \zq_{m}$;
		\STATE  Select $m$  real color images: $\xq_{1}, \xq_{2},..., \xq_{m}$;
		\STATE  $loss_{d}=\nabla_{d}[\frac{1}{m}\sum_{s=1}^{m}f_{d}(\xq_{\mathrm{s}})-\frac{1}{m}\sum_{s=1}^{m}f_{d}(\gq_{\theta}(\mathrm{\zq}_{\mathrm{s}}))]$;
		\STATE $d=d+\sigma {\rm RMSProp}(w, loss_{d})$;
		\STATE $d=d+\text{limit}(w,-c,c)$;
		\ENDFOR
		\STATE  Select m sample points from a known noise distribution $p_{\zq}(\zq):$ $\zq^{1}, \zq^{2},..., \zq^{m}$.
		\STATE $loss_{g}=-\nabla_{d}[\frac{1}{m}\sum_{s=1}^{m}f_{d}(\gq_{\theta}(\zq^{s}))]$;
		\STATE  $\theta=\theta-\sigma {\rm RMSProp}(w,loss_g)$.
		\ENDFOR
	\end{algorithmic}
\end{algorithm}

Compared to the QGAN algorithm in \cite{r13}, we have improved the method of calculating the loss value using the dual form of QWD, removed the quaternion normalization process of the data and replaced the optimization function with {\rm RMSProp} function \cite{tihi12}. 

\section{Experiments }\label{sec:exs}
In this section, we compare the newly proposed WQGAN with  GAN \cite{r5}, WGAN  \cite{r7}, QGAN \cite{r12} and Denoising Diffusion Probabilistic Model (DDPM) \cite{r14} in the color image generation. 
In all experiments of this section, the batch-size is set to 64 and the learning rate is set to 0.0002. All GAN-based models adopt the same network architecture, with differences only in the convolution methods and loss functions. For the DDPM, a four-layer upsampling of the same scale as that used in other models is employed.

\begin{figure}[ht]
	\centering
	\setlength{\abovecaptionskip}{-0cm}
	\setlength{\belowcaptionskip}{-0cm}
	\vspace{-0.1cm}
	\includegraphics[width=0.8\textwidth,height=0.2\textwidth]{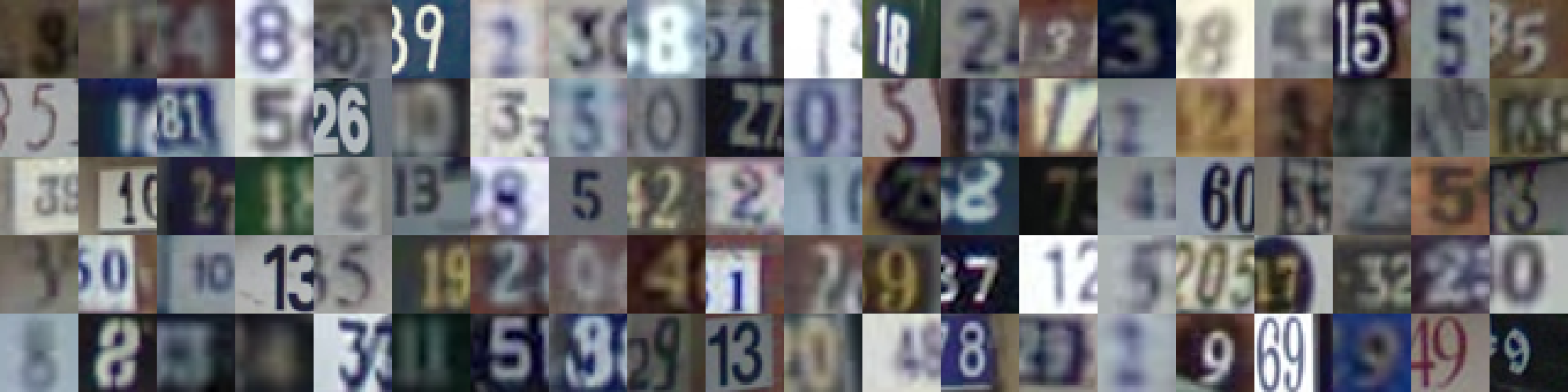}
	\caption{Samples from SVHN database.}
	\label{p:SVHN}
\end{figure}

\begin{figure}[ht]
	\centering
	\setlength{\abovecaptionskip}{-0cm}
	\setlength{\belowcaptionskip}{-0cm}
	\vspace{-0.1cm}
	\includegraphics[width=0.8\textwidth,height=0.2\textwidth]{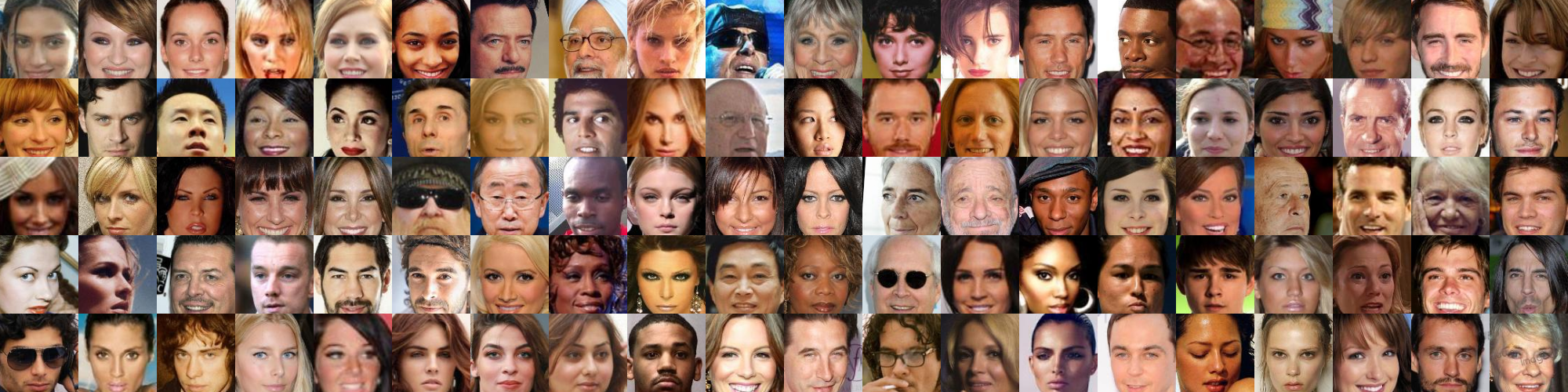}
	\caption{Samples from CelebA database.}
	\label{p:CelebA}
\end{figure}

The Inception Score (IS)\cite{is2016} and Fréchet Inception Distance (FID)\cite{fid2017}are used as the criteria for assessing the quality of model creation.

The Inception Score is a quantitative measure used to analyze the performance of generative models, particularly in terms of the diversity and quality of the images they generate. 
The following formula calculates IS:
\begin{equation}
	IS(\mathrm{P_g})=\exp(\mathbb{E}_{\xq \sim\mathrm{P}_g} KL ( p(\yq|\xq))||p(\yq))),
\end{equation}
where $\xq$ represents a given data point, $\yq$ denotes a predefined label, $p(\yq|\xq)$ is the probability of $\yq$ given $\xq$, $p(\yq)$ is the distribution of labels. $KL$ stands for the Kullback-Leibler divergence.
The FID metric quantifies the resemblance between two sets of images based on the statistical characteristics of their computer vision properties. A FID score of 0 represents the highest level of similarity between two sets of images. The lower the score, the more similar the two sets of images are.

The following formula calculates FID:
\begin{equation}
	FID(P_r,P_g)=||\mu_{P_r}-\mu_{P_g}||_2^2+Tr(\Sigma_{P_r}+\Sigma_{P_g}-2(\Sigma_{P_r}\Sigma_{P_g})^{0.5}),
\end{equation}
where $Tr$ denotes the trace of the matrix, $P_r$ and $P_g$ denote the real images and the generated images, and $\mu, \Sigma$ denotes the mean value and covariance matrix.

Two databases to evaluate algorithms: Street View House Number (SVHN) and CelebA database (CelebA).

The SVHN database consists of 99,289 cropped RGB images of street numbers. The training set comprises 73,257 images, whereas the test set consists of 26,032 images. The numbers in the images are not aligned, and the backgrounds are different.

The CelebA database contains 202,599 face images with various angles and expressions. The training set comprises 200,599 images from the database, while the remaining images form the test set.

\subsection{Experiment}

\begin{table}[ht]
	\centering
	\caption{Generation Images FID and IS with training SVHN database.}
	\label{t:SVHN}

	\small
	\begingroup
	\begin{tabular}{c|ccccc}
		\hline
		\textbf{Iteration} & \textbf{WQGAN} & \textbf{WGAN} & \textbf{QGAN} & \textbf{GAN} & \textbf{DDPM} \\ \hline

		\multicolumn{6}{c}{\textbf{[FID]}} \\ \cline{1-6}
		2000    & \textbf{355.7127}      & 477.8739     	& 533.1914     	& 436.7711    & 914.9643 \\
		4000    & \textbf{372.6435}      & 411.9887      & 501.9213      & 480.9005     & 827.6554 \\
		6000    & \textbf{321.6874}      & 499.7613    	& 451.3789    	& 419.5708    & 722.1165 \\
		8000    & \uline{388.7583}      & \textbf{368.6779}     	& 415.4067     	& 419.1335     & 492.0474 \\
		10000   & \textbf{357.1885}      & 387.0714      & 363.2764      & 367.4091      & 453.3757 \\
		12000   & \uline{354.3042}      & \textbf{344.0127}      & 391.3905      & 439.8063      & 407.3923 \\
		14000   & \uline{360.3284}      & 376.0911      & 368.5414      & 438.5773      & \textbf{345.1264} \\
		16000   & 343.8931      & \uline{335.5764}     	& 337.8915      & 398.5852      & \textbf{327.5645} \\
		18000   & \uline{338.5418}     	& 361.5312    	& 424.0336      & 340.6606      & \textbf{287.9365} \\
		20000   & \textbf{316.0346}     	& 362.4927      & 324.0385     	& 395.4260      & 334.0176 \\ \hline

		\multicolumn{6}{c}{\textbf{[IS ($\pm$ std)]}} \\ \cline{1-6}
		2000    & \textbf{1.38$\pm$0.15} 	& 1.24$\pm$0.10  	& 1.29$\pm$0.09  	& 1.24$\pm$0.09  & \uline{1.25$\pm$0.15} \\
		4000    & \textbf{1.38$\pm$0.25}  	& 1.28$\pm$0.07  	& 1.25$\pm$0.10  	& \uline{1.35$\pm$0.20}  & 1.32$\pm$0.15 \\
		6000    & 1.32$\pm$0.14  	& \uline{1.38$\pm$0.18}     & 1.33$\pm$0.13 	& 1.29$\pm$0.08  & \textbf{1.45$\pm$0.21} \\
		8000    & 1.34$\pm$0.12  	& 1.28$\pm$0.08  	& \textbf{1.62$\pm$0.30}  	& 1.38$\pm$0.10  & \uline{1.50$\pm$0.28} \\
		10000   & 1.30$\pm$0.05  	& \uline{1.51$\pm$0.22}  	& 1.46$\pm$0.17  	& 1.44$\pm$0.14  & \textbf{1.55$\pm$0.30} \\
		12000   & 1.38$\pm$0.10  	& \textbf{1.41$\pm$0.19}  	& 1.29$\pm$0.06  	& \uline{1.39$\pm$0.07}  & 1.31$\pm$0.17 \\
		14000   & 1.39$\pm$0.13  	& 1.34$\pm$0.15  	& \uline{1.40$\pm$0.24}  	& 1.26$\pm$0.10  & \textbf{1.80$\pm$0.59} \\
		16000   & \uline{1.43$\pm$0.23}  	& 1.36$\pm$0.11  	& 1.44$\pm$0.31 	& 1.32$\pm$0.07  & \textbf{1.69$\pm$0.41} \\
		18000   & 1.31$\pm$0.12  	& \uline{1.32$\pm$0.11}  	& 1.25$\pm$0.26 	& \textbf{1.49$\pm$0.25}  & 1.26$\pm$0.04 \\
		20000   & \textbf{1.47$\pm$0.17}  	& \uline{1.41$\pm$0.13}  	& 1.37$\pm$0.19  	& 1.29$\pm$0.10  & 1.36$\pm$0.94 \\ \hline
	\end{tabular}
	\endgroup 
\end{table}

\begin{figure}[ht]
	\centering
	
	\setlength{\abovecaptionskip}{-0cm}
	\setlength{\belowcaptionskip}{-0cm}
	\vspace{-0.1cm}
	\includegraphics[width=0.9\textwidth,height=0.6\textwidth]{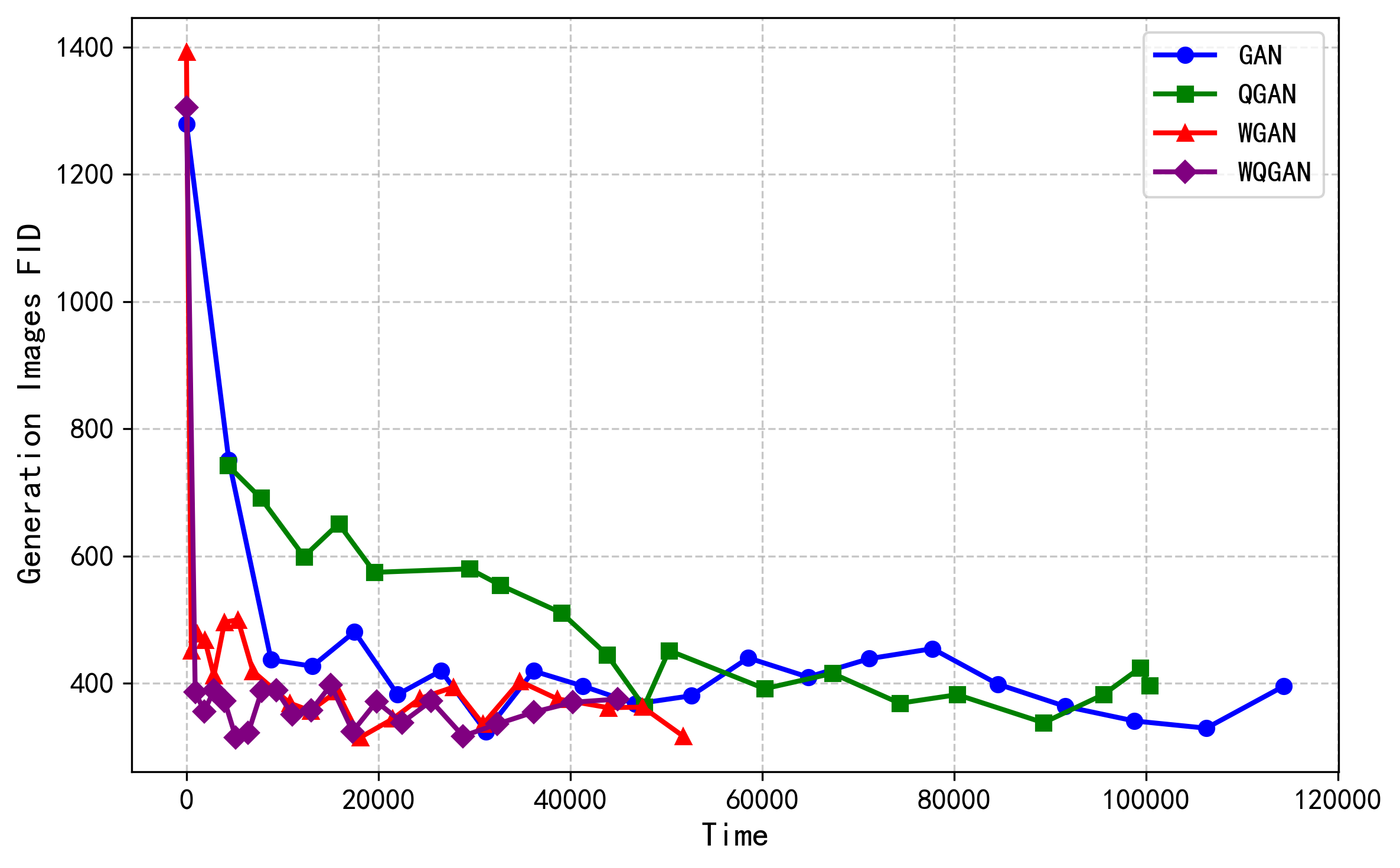}
	\caption{Line graph of FID for generated images (SVHN).}	
	\label{p:svhn_fid_com}
\end{figure}

\begin{figure}[ht]
	\centering
	
	\setlength{\abovecaptionskip}{-0cm}
	\setlength{\belowcaptionskip}{-0cm}
	\vspace{-0.1cm}
	\includegraphics[width=0.9\textwidth,height=0.45\textwidth]{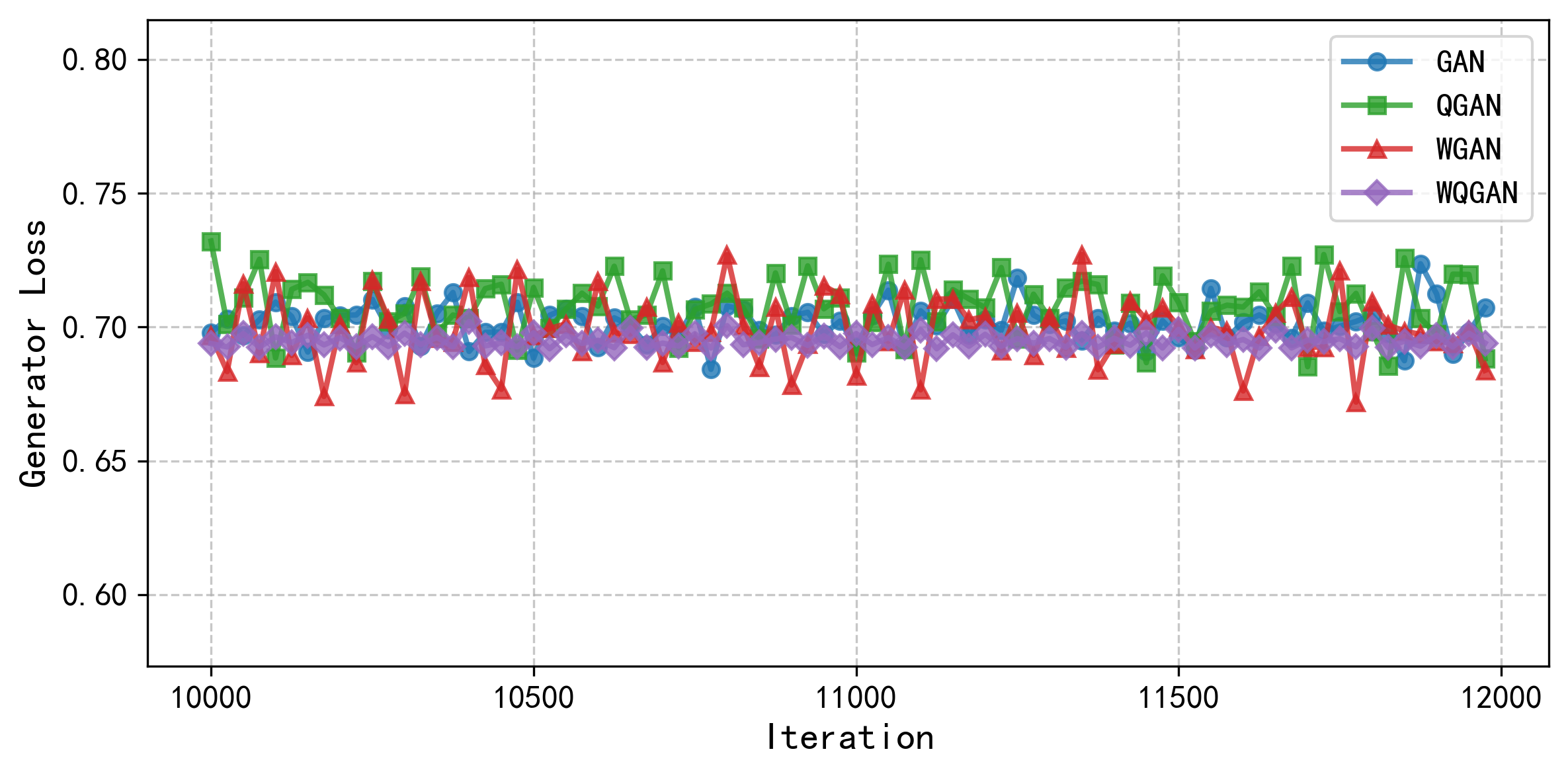}
	\caption{Line graph of generation losses for images (SVHN).}
	\label{p:svhn_loss_com}
\end{figure}

\begin{table}[ht]
	\centering
	\caption{Generation Images FID and IS with training CelebA database.}
    \label{t:Celeba}
    \small
    \begingroup
    \begin{tabular}{c|ccccc}
        \hline
        \textbf{Iteration} & \textbf{WQGAN} & \textbf{WGAN} & \textbf{QGAN} & \textbf{GAN} & \textbf{DDPM} \\ \hline

        \multicolumn{6}{c}{\textbf{[FID]}} \\ \cline{1-6}
		2000    & \textbf{661.8018}      & \uline{711.3851}      & 1311.9301     & 1209.286   & 2298.1365  \\
		4000    & \textbf{643.3912}      & \uline{664.3755}      & 1162.832      & 1134.2018  & 1679.1475    \\
		6000    & \uline{604.7010}      & \textbf{576.4101}      & 1061.5690     & 1160.6651  & 1080.3276   \\
		8000    & \uline{597.2210}      & \textbf{570.8796}      & 931.0366      & 989.9894   & 668.8734   \\
		10000   & \uline{591.8507}      & \textbf{569.2608}      & 989.9965      & 786.8214   & 696.6965   \\
		12000   & \textbf{554.5563}      & \uline{619.2530}      & 910.7812      & 732.9134   & 676.3945    \\
		14000   & \textbf{550.8968}      & 623.5273      & 901.3241      & 715.6228   & \uline{599.8234}   \\
		16000   & \textbf{532.8855}      & \uline{590.8796}      & 931.0366      & 673.3694   & 623.3956   \\
		18000   & \textbf{510.3748}      & 607.1591      & 887.4065      & 645.4864   & \uline{546.1776}   \\
		20000   & \textbf{480.5533}      & 570.2375      & 810.5755      & 610.5495   & \uline{547.6576}   \\ \hline

        \multicolumn{6}{c}{\textbf{[IS ($\pm$ std)]}} \\ \cline{1-6}
        2000    & \uline{1.47$\pm$0.16}  & \textbf{1.56$\pm$0.20}  & 1.33$\pm$0.24  & 1.38$\pm$0.12  & 1.40$\pm$0.28 \\
        4000    & \uline{1.57$\pm$0.23}  & \textbf{1.60$\pm$0.18}  & 1.28$\pm$0.23  & 1.51$\pm$0.28  & 1.43$\pm$0.21 \\
        6000    & \uline{1.56$\pm$0.18}  & 1.52$\pm$0.18  & 1.36$\pm$0.20  & 1.52$\pm$0.17  & \textbf{1.57$\pm$0.38} \\
        8000    & \uline{1.62$\pm$0.20}  & 1.46$\pm$0.18  & 1.21$\pm$0.36  & 1.54$\pm$0.12  & \textbf{1.67$\pm$0.42} \\
        10000   & 1.44$\pm$0.09  & \textbf{1.57$\pm$0.29}  & 1.44$\pm$0.32  & 1.40$\pm$0.16  & \uline{1.56$\pm$0.35} \\
        12000   & 1.55$\pm$0.20  & 1.50$\pm$0.13  & \textbf{1.64$\pm$0.25}  & 1.37$\pm$0.12  & \uline{1.56$\pm$0.44} \\
        14000   & 1.40$\pm$0.12  & 1.46$\pm$0.12  & 1.55$\pm$0.16  & \textbf{1.58$\pm$0.20}  & \uline{1.48$\pm$0.74} \\
        16000   & 1.54$\pm$0.25  & 1.50$\pm$0.29  & 1.46$\pm$0.18  & \textbf{1.64$\pm$0.23}  & \uline{1.64$\pm$0.55} \\
        18000   & 1.52$\pm$0.26  & \uline{1.52$\pm$0.12}  & \textbf{1.57$\pm$0.24}  & 1.42$\pm$0.18  & 1.50$\pm$0.60 \\
        20000   & \textbf{1.66$\pm$0.33}  & \uline{1.62$\pm$0.27}  & 1.54$\pm$0.18  & 1.36$\pm$0.17  & 1.43$\pm$0.21 \\
        \hline
	\end{tabular}
	\endgroup 
\end{table}

\begin{figure}[ht]	
	\centering
	\setlength{\abovecaptionskip}{-0cm}
	\setlength{\belowcaptionskip}{-0cm}
	\vspace{-0.1cm}
	\includegraphics[width=0.9\textwidth,height=0.6\textwidth]{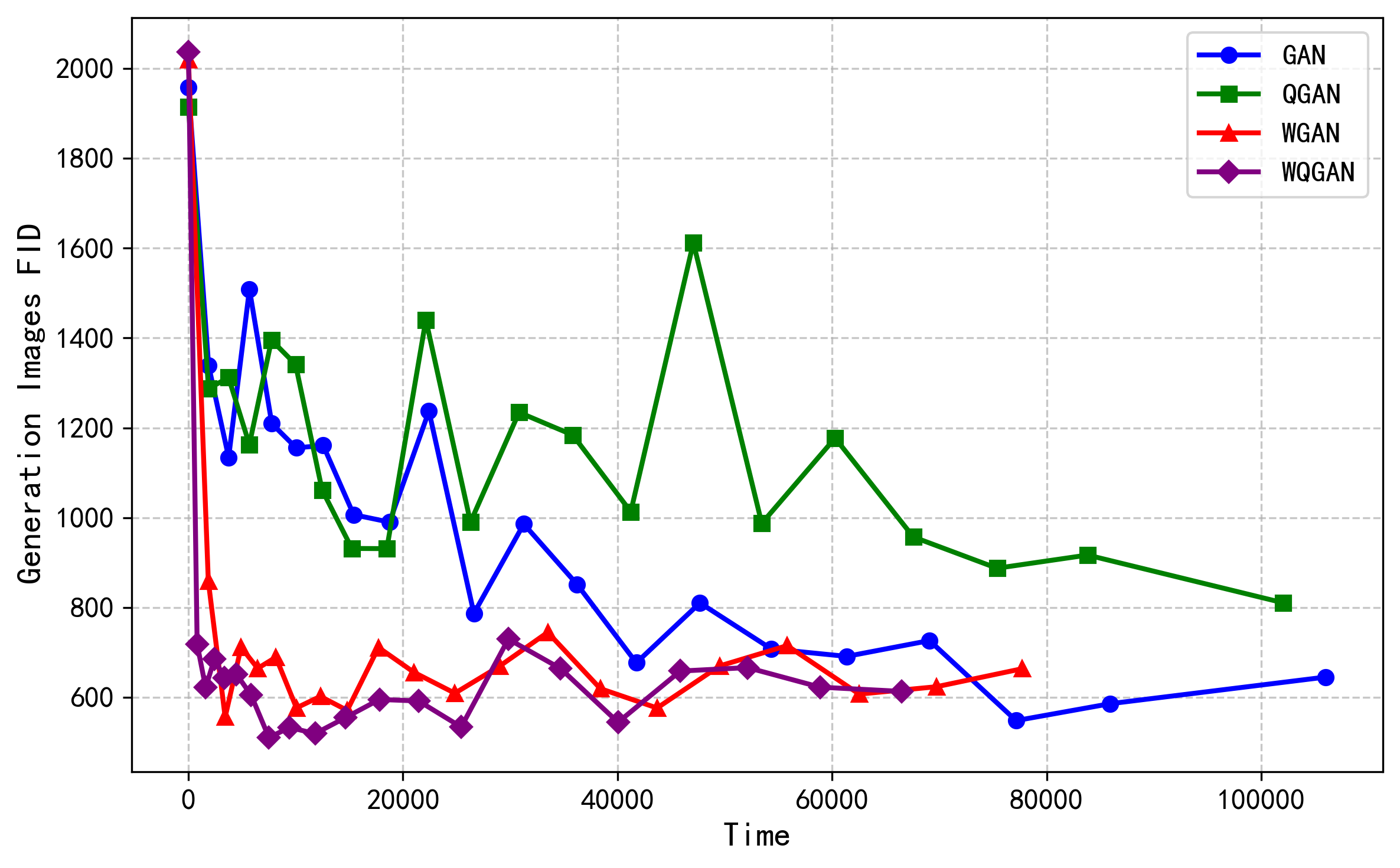}
	\caption{Line graph of FID for generated images (CelebA).}
	\label{p:celeba_fid}
\end{figure}

\begin{figure}[ht]
	\centering
	
	\setlength{\abovecaptionskip}{-0cm}
	\setlength{\belowcaptionskip}{-0cm}
	\vspace{-0.1cm}
	\includegraphics[width=0.9\textwidth,height=0.45\textwidth]{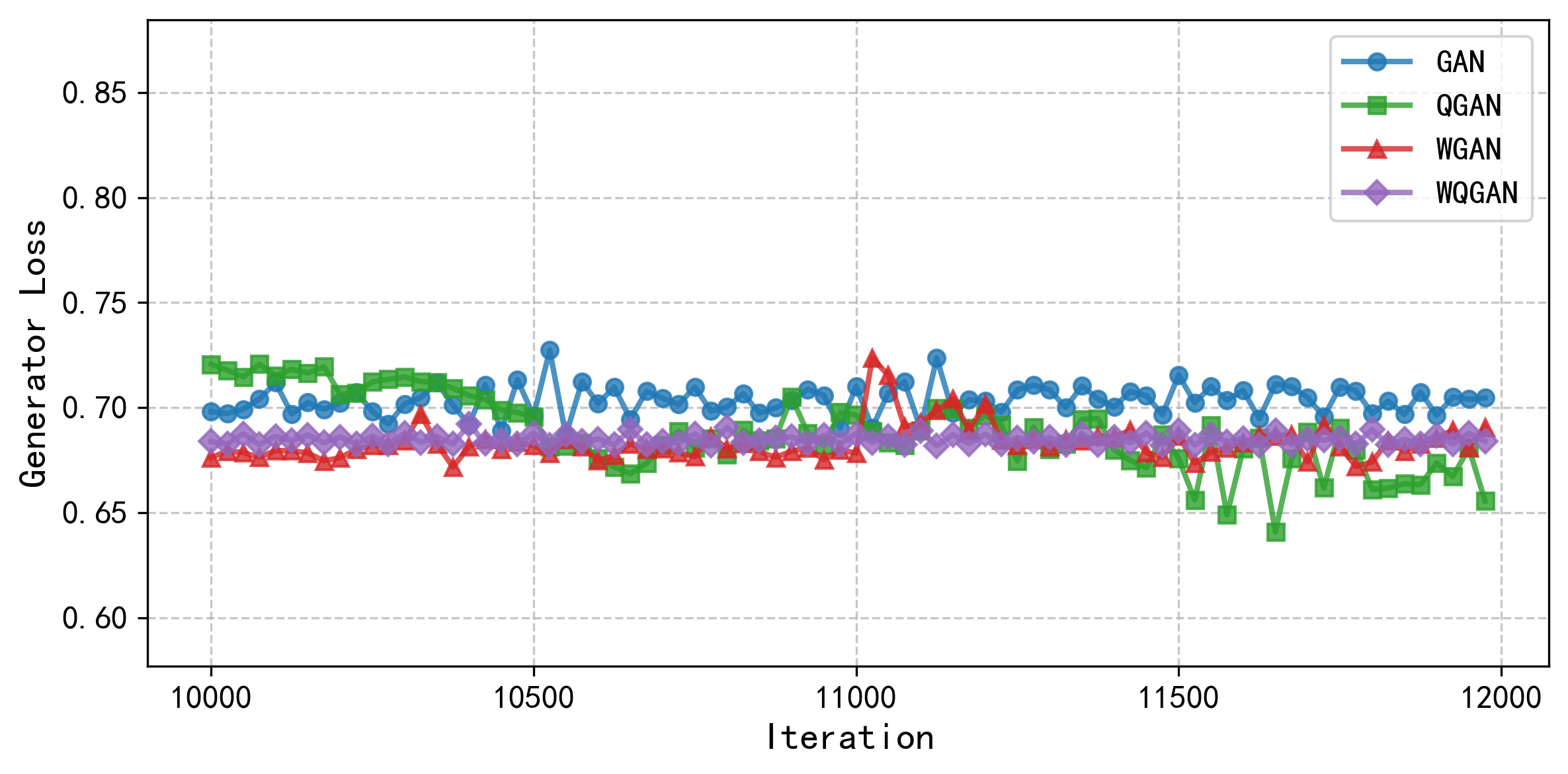}
	\caption{Line graph of generation losses for images (CELEBA).}
	\label{p:celeba_loss_com}
\end{figure}

All experiments were conducted on a personal computer with CPU: 12th GEN intel i5-12400 and GPU: NVIDIA GeForce 1070Ti 8G.

We test the quality of images generated by various algorithms at the initial iteration. Due to the need to call the classification model Inception v3 for each evaluation, which consumes a portion of the GPU memory, we only assess the generation quality of the four generative models for the first $20$k iterations of generating images at $2$k intervals. We list the FID and IS metrics about SVHN in Table \ref{t:SVHN} and CelebA in Table \ref{t:Celeba}. Figure \ref{p:svhn_fid_com} and Figure \ref{p:celeba_fid} show the trends of FID with respect to time.
To better compare the convergence states, Figure \ref{p:svhn_loss_com} and Figure \ref{p:celeba_loss_com} show the average losses calculated every 50 iterations between the 10k and 12k iterations.
We can see that WQGAN can converge to the real image faster compared to other models. GAN model, reaching the optimal metrics at some iterations, is unable to maintain the advantage and constantly oscillates. WGAN with introduction of Wasserstein distance and QGAN with introduction of quaternion deconvolution both improve the problem of GAN instability, but it is all one-sided improvement. WQGAN combines the advantages of Wasserstein distance and quaternion deconvolution, which enable the generative model to generate high-quality images faster and achieve better generation results.

\begin{figure}[ht]
	\centering
	\setlength{\abovecaptionskip}{-0cm}
	\setlength{\belowcaptionskip}{-0cm}
	\vspace{-0.1cm}
	\includegraphics[width=1\textwidth,height=0.7\textwidth]{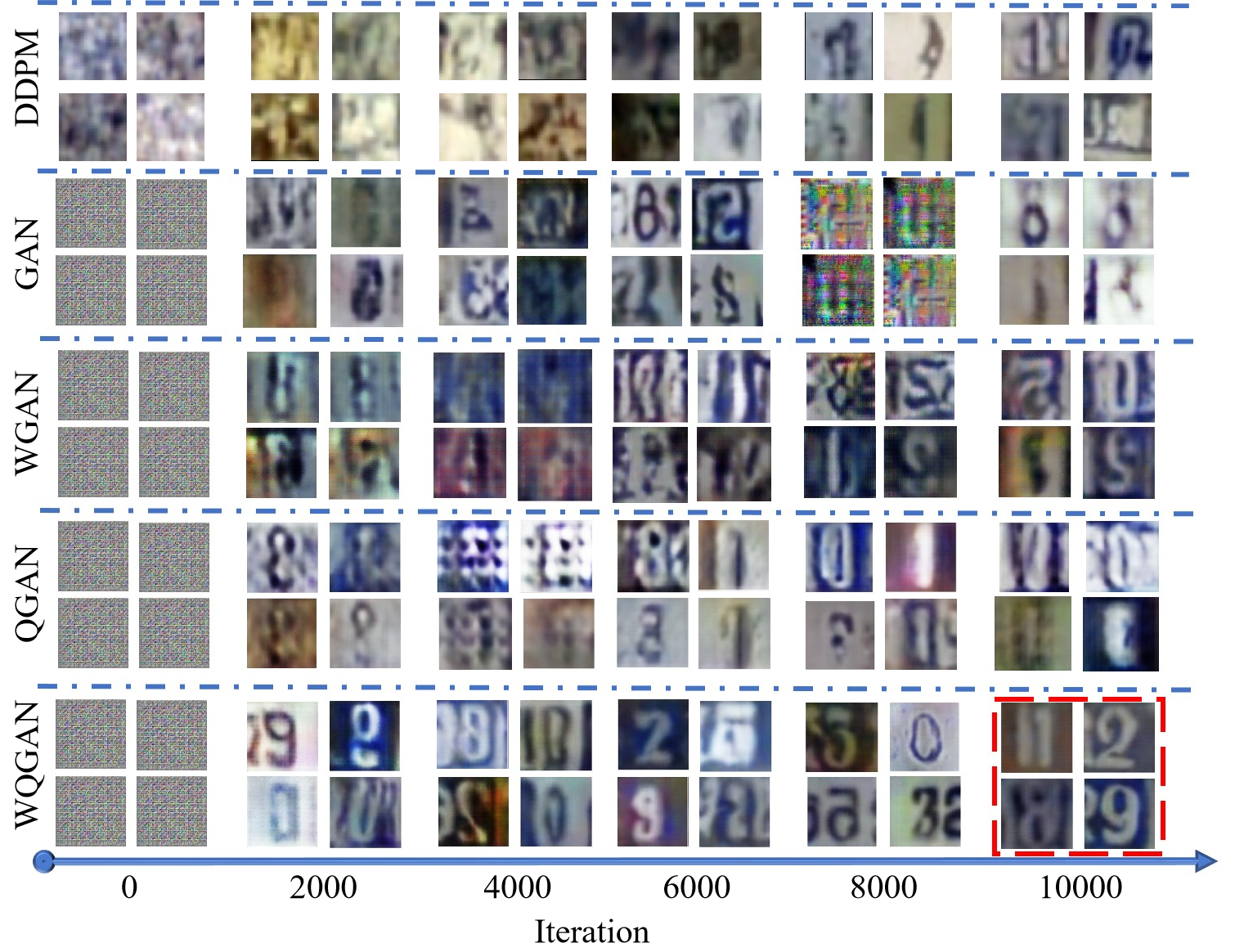}
	\caption{Examples of SVHN images generated by the four generative models at iterations from 0 to 10k. Four images are displayed each 2k iterations. From top to bottom: DDPM, GAN, WGAN, QGAN, WQGAN.}
	\label{p:trend_svhn}
\end{figure}

For the generation process of those generative models, we will showcase it using the SVHN dataset. Figure \ref{p:trend_svhn} illustrates the evolution of the generated images by the five models over the first 10k iterations. The images are sampled every 2k iterations, displaying four images at a time. As we observe, the images generated by the GAN and DDPM models are merely combinations of lines, and there is a phenomenon of mode collapse during the training process. The images generated at 8k iterations are almost akin to colored noise, making them difficult to discern. This can lead to challenges or even failure in later stages of training. WGAN can generate some street scene digital photos, but there are still issues with the generated fonts being distorted and difficult to identify, as well as problems with the blending of font colors with background colors, and deviations. These circumstances result in images that are not sufficiently clean and aesthetically pleasing, creating disparities with real datasets. WQGAN can promptly identify the generation task and initiate stable generation of street scene digital images. As the number of iterations increases, the generated street scene images progressively approach those in real datasets. From Figure \ref{p:trend_svhn}, we can observe that the image quality generated at 10k iterations is significantly superior to those generated by other models. This demonstrates that WQGAN combines the strengths of WGAN and QGAN, enabling not only rapid generation of target images but also an awareness of relationships between channels, thereby avoiding color deviation issues.

\begin{table}[ht]
	\centering

	\begin{tabular}{llllllll}
		\hline
		\multicolumn{8}{l}{\textbf{DDPM : FID = 72.3748 IS = 1.75 $\pm$ 0.55}} \\ \hline\hline
		\begin{minipage}[b]{0.09\columnwidth}
			\centering
			\vspace{1pt} 
			\raisebox{-.5\height}{\includegraphics[width=\linewidth]{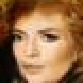}}
		\end{minipage}
		& \begin{minipage}[b]{0.09\columnwidth}
			\centering
			\raisebox{-.5\height}{\includegraphics[width=\linewidth]{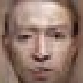}}
		\end{minipage}        
		& \begin{minipage}[b]{0.09\columnwidth}
			\centering
			\raisebox{-.5\height}{\includegraphics[width=\linewidth]{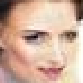}}
		\end{minipage}        
		& \begin{minipage}[b]{0.09\columnwidth}
			\centering
			\raisebox{-.5\height}{\includegraphics[width=\linewidth]{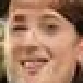}}
		\end{minipage}         
		& \begin{minipage}[b]{0.09\columnwidth}
			\centering
			\raisebox{-.5\height}{\includegraphics[width=\linewidth]{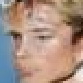}}
		\end{minipage}                  
		&\begin{minipage}[b]{0.09\columnwidth}
			\centering
			\raisebox{-.5\height}{\includegraphics[width=\linewidth]{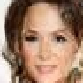}}
		\end{minipage}       
		& \begin{minipage}[b]{0.09\columnwidth}
			\centering
			\raisebox{-.5\height}{\includegraphics[width=\linewidth]{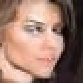}}
		\end{minipage}         
		& \begin{minipage}[b]{0.09\columnwidth}
			\centering
			\raisebox{-.5\height}{\includegraphics[width=\linewidth]{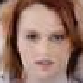}}
		\end{minipage}               \\ \hline
		\multicolumn{8}{l}{\textbf{GAN : FID = 68.1870 IS = 1.66 $\pm$ 0.24}} \\ \hline\hline
		\begin{minipage}[b]{0.09\columnwidth}
			\centering
			\vspace{1pt} 
			\raisebox{-.5\height}{\includegraphics[width=\linewidth]{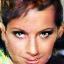}}
		\end{minipage}
		& \begin{minipage}[b]{0.09\columnwidth}
			\centering
			\raisebox{-.5\height}{\includegraphics[width=\linewidth]{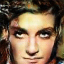}}
		\end{minipage}        
		& \begin{minipage}[b]{0.09\columnwidth}
			\centering
			\raisebox{-.5\height}{\includegraphics[width=\linewidth]{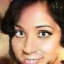}}
		\end{minipage}        
		& \begin{minipage}[b]{0.09\columnwidth}
			\centering
			\raisebox{-.5\height}{\includegraphics[width=\linewidth]{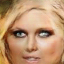}}
		\end{minipage}         
		& \begin{minipage}[b]{0.09\columnwidth}
			\centering
			\raisebox{-.5\height}{\includegraphics[width=\linewidth]{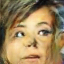}}
		\end{minipage}                  
		&\begin{minipage}[b]{0.09\columnwidth}
			\centering
			\raisebox{-.5\height}{\includegraphics[width=\linewidth]{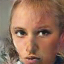}}
		\end{minipage}       
		& \begin{minipage}[b]{0.09\columnwidth}
			\centering
			\raisebox{-.5\height}{\includegraphics[width=\linewidth]{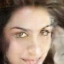}}
		\end{minipage}         
		& \begin{minipage}[b]{0.09\columnwidth}
			\centering
			\raisebox{-.5\height}{\includegraphics[width=\linewidth]{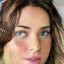}}
		\end{minipage}               \\ \hline
		
		\multicolumn{8}{l}{\textbf{QGAN : FID =   80.5843 IS = 1.67 $\pm$ 0.25}} \\ \hline\hline
		\begin{minipage}[b]{0.09\columnwidth}
			\centering
			\vspace{1pt} 
			\raisebox{-.5\height}{\includegraphics[width=\linewidth]{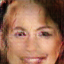}}
		\end{minipage}
		& \begin{minipage}[b]{0.09\columnwidth}
			\centering
			\raisebox{-.5\height}{\includegraphics[width=\linewidth]{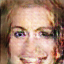}}
		\end{minipage}        
		& \begin{minipage}[b]{0.09\columnwidth}
			\centering
			\raisebox{-.5\height}{\includegraphics[width=\linewidth]{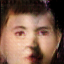}}
		\end{minipage}        
		& \begin{minipage}[b]{0.09\columnwidth}
			\centering
			\raisebox{-.5\height}{\includegraphics[width=\linewidth]{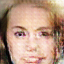}}
		\end{minipage}         
		& \begin{minipage}[b]{0.09\columnwidth}
			\centering
			\raisebox{-.5\height}{\includegraphics[width=\linewidth]{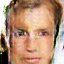}}
		\end{minipage}                  
		&\begin{minipage}[b]{0.09\columnwidth}
			\centering
			\raisebox{-.5\height}{\includegraphics[width=\linewidth]{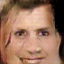}}
		\end{minipage}        
		& \begin{minipage}[b]{0.09\columnwidth}
			\centering
			\raisebox{-.5\height}{\includegraphics[width=\linewidth]{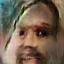}}
		\end{minipage}         
		& \begin{minipage}[b]{0.09\columnwidth}
			\centering
			\raisebox{-.5\height}{\includegraphics[width=\linewidth]{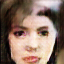}}
		\end{minipage}               \\ \hline
		
		\multicolumn{8}{l}{\textbf{WGAN : FID =   88.4351 IS = 1.71 $\pm$  0.26}} \\ \hline\hline
		\begin{minipage}[b]{0.09\columnwidth}
			\centering
			\vspace{1pt} 
			\raisebox{-.5\height}{\includegraphics[width=\linewidth]{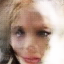}}		
		\end{minipage}
		& \begin{minipage}[b]{0.09\columnwidth}
			\centering
			\raisebox{-.5\height}{\includegraphics[width=\linewidth]{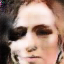}}
		\end{minipage}        
		& \begin{minipage}[b]{0.09\columnwidth}
			\centering
			\raisebox{-.5\height}{\includegraphics[width=\linewidth]{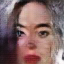}}
		\end{minipage}        
		& \begin{minipage}[b]{0.09\columnwidth}
			\centering
			\raisebox{-.5\height}{\includegraphics[width=\linewidth]{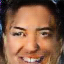}}
		\end{minipage}         
		& \begin{minipage}[b]{0.09\columnwidth}
			\centering
			\raisebox{-.5\height}{\includegraphics[width=\linewidth]{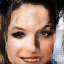}}
		\end{minipage}                  
		&\begin{minipage}[b]{0.09\columnwidth}
			\centering
			\raisebox{-.5\height}{\includegraphics[width=\linewidth]{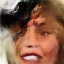}}
		\end{minipage}       
		& \begin{minipage}[b]{0.09\columnwidth}
			\centering
			\raisebox{-.5\height}{\includegraphics[width=\linewidth]{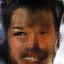}}
		\end{minipage}         
		& \begin{minipage}[b]{0.09\columnwidth}
			\centering
			\raisebox{-.5\height}{\includegraphics[width=\linewidth]{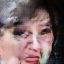}}
		\end{minipage}   \\ \hline
		\multicolumn{8}{l}{\textbf{WQGAN : FID = 68.3037 IS = 1.73 $\pm$  0.18}}  \\ \hline\hline
		\begin{minipage}[b]{0.09\columnwidth}
			\centering
			\vspace{1pt} 
			\raisebox{-.5\height}{\includegraphics[width=\linewidth]{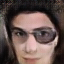}}		
		\end{minipage}
		& \begin{minipage}[b]{0.09\columnwidth}
			\centering
			\raisebox{-.5\height}{\includegraphics[width=\linewidth]{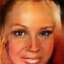}}
		\end{minipage}               
		& \begin{minipage}[b]{0.09\columnwidth}
			\centering
			\raisebox{-.5\height}{\includegraphics[width=\linewidth]{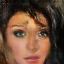}}
		\end{minipage}         
		& \begin{minipage}[b]{0.09\columnwidth}
			\centering
			\raisebox{-.5\height}{\includegraphics[width=\linewidth]{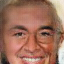}}
		\end{minipage}                  
		&\begin{minipage}[b]{0.09\columnwidth}
			\centering
			\raisebox{-.5\height}{\includegraphics[width=\linewidth]{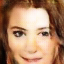}}
		\end{minipage}
		& \begin{minipage}[b]{0.09\columnwidth}
			\centering
			\raisebox{-.5\height}{\includegraphics[width=\linewidth]{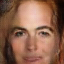}}
		\end{minipage}              
		& \begin{minipage}[b]{0.09\columnwidth}
			\centering
			\raisebox{-.5\height}{\includegraphics[width=\linewidth]{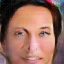}}
		\end{minipage}         
		& \begin{minipage}[b]{0.09\columnwidth}
			\centering
			\raisebox{-.5\height}{\includegraphics[width=\linewidth]{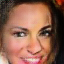}}
		\end{minipage}
		
		\\ \hline  
	\end{tabular}
	\caption{Sample images from the three generative models at 50k iterations.}
	\label{t:com}
\end{table}

\begin{figure}[ht]
	\centering
	\setlength{\abovecaptionskip}{-0cm}
	\setlength{\belowcaptionskip}{-0cm}
	\vspace{-0.1cm}
	\includegraphics[width=1\textwidth,height=0.55\textwidth]{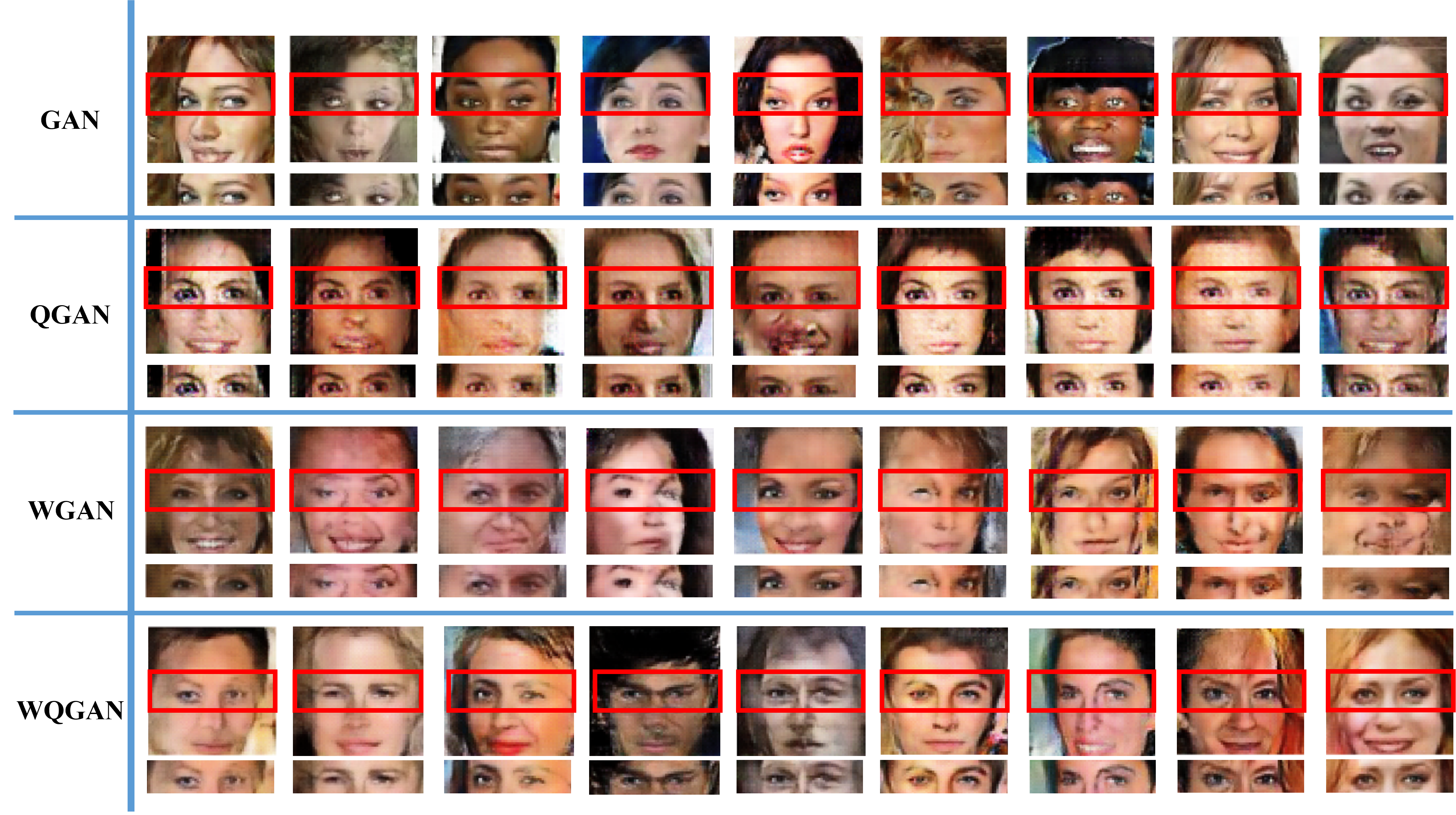}
	\caption{The diversity of four generative models.}
	\label{p:similarly_com}
\end{figure}

Generating human face images is more complex compared to street scene images, requiring generative models to have a longer training time to learn how to generate them. We subsequently test the image generation quality of WQGAN, WGAN, QGAN, GAN and DDPM at 50,000 iterations. Table \ref{t:com} shows that WQGAN possesses superior generative performance. WQGAN can generate diverse facial images, aiming to depict a variety of features for each facial component. For instance, the mouth may exhibit expressions like smiling or pursing, and the eyes could range from squinting to wide open, or even wearing glasses. WGAN only captures the basic shapes of facial features, lacking detailed depiction. This indicates that WGAN has not been trained sufficiently at this iteration count and further training is required. QGAN also begins to sketch facial images. However, an issue arises with the consistent trend in the depiction of facial features, particularly the eyes, as shown in Figure \ref{p:similarly_com}. In the sample images, the eyes generated by QGAN appear visually similar across faces. This suggests that QGAN may lead to a lack of diversity in generated images.

Above all, WQGAN combines the strengths of both WGAN and QGAN as Table \ref{t:com_gans} shows. The introduction of QWD further advances QGAN and holds promise for its application in a broader range of quaternion neural networks.

\begin{table}[ht]
	\centering
	
	\begin{tabular}{|ccccc|}
		\hline
		Relative   Performances & GAN & QGAN & WGAN & WQGAN \\ \hline
		Stability               &     & \checkmark    & \checkmark    & \checkmark     \\
		Siversity               &     &      & \checkmark    & \checkmark     \\
		Speed                   &     &      & \checkmark    & \checkmark     \\
		Quality              &     & \checkmark     &      & \checkmark     \\ \hline
	
	\end{tabular}
	\vspace{0.2cm}
	\caption{Comparison of four generative adversarial networks.}
\label{t:com_gans}
\end{table}

\section{Conclusion} \label{sec:con}
In this paper, we introduce a novel QWD and derive the corresponding dual form by introducing a new quaternion linear programming problem. Subsequently, we propose a new WQGAN by changing training objective function of QGAN. This novel model not only effectively utilizes the interplay among three channels in the color space but also evaluates the discrepancy between generated data and real data through Wasserstein distance, leading to faster generation of high-quality images.

In future, we will optimize this model to enable it to generate higher-quality  images and apply it to various practical applications of color image processing.

\section*{Acknowledgments}
The authors are grateful to the editor and the anonymous referees for their wonderful comments and helpful suggestions. We are also grateful to Dong Yao and Xiao-Bin Sun from JSNU for their excellent comments and suggestions on the definition and properties of QWD.









\end{document}